\documentclass{article}
\PassOptionsToPackage{round}{natbib}

\usepackage[preprint]{neurips_2026}

\usepackage[american]{babel}
\usepackage{microtype}
\usepackage{graphicx}
\usepackage{subfigure}
\usepackage{multirow}
\usepackage{bbm}
\usepackage{hyperref}
\usepackage{wrapfig}

\usepackage{amsmath}
\usepackage{array}
\usepackage{booktabs}
\usepackage{longtable}

\usepackage[dvipsnames]{xcolor}
\usepackage{mathtools} 
\usepackage{booktabs} 

\usepackage{algorithm}
\usepackage[noEnd=true, indLines=true, endLComment=~]{algpseudocodex}
\algrenewcommand\algorithmicrequire{\textbf{Inputs:}}
\algrenewcommand\algorithmicensure{\textbf{Outputs:}}

\usepackage{amsmath,amsfonts,bm}

\def\eqref#1{equation~\ref{#1}}

\def\1{\bm{1}}

\DeclareMathAlphabet{\mathsfit}{\encodingdefault}{\sfdefault}{m}{sl}
\SetMathAlphabet{\mathsfit}{bold}{\encodingdefault}{\sfdefault}{bx}{n}

\newcommand{\E}{\mathbb{E}}

\newcommand{\R}{\mathbb{R}}

\newcommand{\sigmoid}{\sigma}

\newcommand{\KL}{D_{\mathrm{KL}}}
\newcommand{\Var}{\mathrm{Var}}

\usepackage{url}
\usepackage{amsmath}
\usepackage{amssymb}
\usepackage{mathrsfs}
\usepackage{amsthm}
\usepackage{lipsum}
\usepackage{thmtools}
\usepackage{thm-restate}
\usepackage{adjustbox}
\usepackage{caption}

\usepackage{tabularx}

\usepackage[mode=buildnew]{standalone}
\usepackage{tikz}
\usepackage{pgfplots}

\usepackage[capitalize,noabbrev]{cleveref}

\usepgfplotslibrary{groupplots}
\pgfplotsset{compat=1.18}   
\definecolor{truedir}{RGB}{0,158,115}
\definecolor{revdir}{RGB}{213,94,0}
\definecolor{noedge}{RGB}{153,153,153}

\DeclareMathOperator{\pa}{Pa}        
      
\newcommand{\Id}{I_m}
\newcommand{\Ones}{\mathbf{1}_{m\times m}}

\definecolor{mydarkblue}{rgb}{0,0.08,0.45}
\hypersetup{ %
    pdftitle={},
    pdfauthor={},
    pdfsubject={},
    pdfkeywords={},
    pdfborder=0 0 0,
    pdfpagemode=UseNone,
    colorlinks=true,
    linkcolor=mydarkblue,
    citecolor=mydarkblue,
    filecolor=mydarkblue,
    urlcolor=mydarkblue,
}

\theoremstyle{plain}

\theoremstyle{definition}

\theoremstyle{remark}

\hypersetup{colorlinks=true,citecolor=blue,urlcolor=blue}

\crefformat{section}{#2Section~#1#3}
\crefformat{appendix}{#2Appendix~#1#3}

\crefformat{equation}{(#2#1#3)}
\crefmultiformat{equation}{#2Equations~#1#3}%
{ \&~#2#1#3}{, #2#1#3}{, \&~#2#1#3}

\crefformat{table}{#2Table~#1#3}
\crefformat{figure}{#2Figure~#1#3}

\crefformat{theorem}{#2Theorem~#1#3}
\crefmultiformat{theorem}{#2Theorems~#1#3}%
{ \&~#2#1#3}{, #2#1#3}{, and~(#2#1#3)}

\crefformat{lemma}{#2Lemma~#1#3}
\crefformat{proposition}{#2Proposition~#1#3}
\crefmultiformat{proposition}{#2Propositions~#1#3}%
{ \&~#2#1#3}{, #2#1#3}{, and~(#2#1#3)}

\crefformat{algorithm}{#2Algorithm~#1#3}
\crefmultiformat{algorithm}{#2Algorithms~#1#3}%
{ \&~#2#1#3}{, #2#1#3}{, and~(#2#1#3)}

\crefformat{corollary}{#2Corollary~#1#3}
\crefformat{definition}{#2Definition~#1#3}

\crefformat{assumption}{#2Assumption~#1#3}
\crefmultiformat{assumption}{#2Assumptions~#1#3}%
{ \&~#2#1#3}{, #2#1#3}{, and~(#2#1#3)}

\usepackage{pifont} 

\title{\texttt{SVI-DAG}: A Structured Variational Inference Approach to Bayesian Causal Discovery}

\author{%
\textbf{Shrenik Zinage\thanks{This work was conducted while the author was a PhD student at Purdue University.}} \\
Massachusetts Institute of Technology \\
\texttt{shrenik@mit.edu}
}

\begin{document}

\maketitle

\begin{abstract}
Bayesian causal discovery seeks to determine the posterior distribution of causal theories, which are interpreted as directed acyclic graphs (DAGs) that explain the observed data. The resulting posterior allows systematic reasoning regarding epistemic uncertainty within these theories. Nonetheless, finding such graphs is difficult due to identifiability problems and limited observational data. Furthermore, precisely approximating posterior over graphs is challenging given vast range of potential DAGs. Recent Bayesian approaches have addressed some of these challenges, yet they remain limited as they fail to encode dependencies between edges, and lack principled ways to incorporate domain knowledge as inductive biases during the search process. To overcome these limitations, we propose \texttt{SVI-DAG}, a structured variational inference approach to Bayesian causal discovery using observational data and prior beliefs that uses normalizing flows to model dependencies between edges, supporting expressive and multimodal posterior learning over DAGs. 
To mitigate mode seeking behaviour in evidence lower bound optimization and promote mode coverage, we use stein variational gradient descent to update the node potentials using a kernel in acyclicity space. 
We evaluate \texttt{SVI-DAG} against 5 state-of-the-art Bayesian DAG learning methods and demonstrate competitive performance in terms of both accuracy and uncertainty quantification.
The code is available at \url{https://github.com/shrenikvz/SVI-DAG}.
\end{abstract}

\section{Introduction}
\label{sec:introduction}
Understanding causal relationships from data~\citep{scholkopf2021toward} remains a central problem across science and engineering. Causal discovery---the task of inferring which variables directly influence others---allows researchers to move beyond mere associations and reason about the consequences of interventions in complex systems. Such a capability holds the potential to accelerate progress across a broad range of domains, including epidemiology~\citep{tennant2021use}, climate science~\citep{runge2019inferring}, economics~\citep{imbens2020potential}, psychology~\citep{foster2010causal}, and beyond.
Causal relationships are commonly represented as directed acyclic graphs (DAGs)~\citep{peters2017elements}, where nodes correspond to variables and directed edges encode direct causal influences.
Learning the DAG from data, however, constitutes a formidable combinatorial problem~\citep{heinze2018causal}, as the number of admissible DAGs grows super-exponentially with the number of variables and with purely observational data, the true DAG may be identifiable only up to its Markov equivalence class (MEC)\footnote{The Markov equivalence class is the set of DAGs that encode the same conditional independencies.}. This fundamental identifiability limitation motivates a Bayesian treatment~\citep{friedman2003being}, wherein the goal is to approximate the posterior distribution over all DAGs that explain the data.
Recent work on safe and trustworthy AI~\citep{bengio2026international} further argues that epistemic uncertainty over causal theories is essential for building non-agentic Scientist AI~\citep{bengio2025superintelligent} systems that provide reliable predictions and can serve as guardrails against overconfident decision making.

Existing Bayesian approaches to DAG learning have made substantial progress, yet several limitations remain. A predominant assumption is that edges are modeled as independent random variables, thereby disregarding the rich dependencies that exist between edges. Failing to capture these dependencies leads to difficulty in capturing the multimodal structure of posterior. Moreover, existing Bayesian methods largely lack principled mechanisms for incorporating domain knowledge, which limits their capacity to guide the search toward structurally plausible regions of hypothesis space. 

\textbf{Our Approach.} To address these shortcomings, we introduce \texttt{SVI-DAG}, a structured variational inference (SVI) approach to Bayesian causal discovery. Our \textbf{core contributions} are as follows:

\begin{itemize}
    \item We propose a differentiable Bayesian approach to causal discovery that encodes dependencies between edge logits using conditional normalizing flows to support expressive and multimodal learning over DAGs. 
    \item Our approach provides a principled way to incorporate domain knowledge as an inductive bias using a new type of prior that adapts based on how strong or weak the prior beliefs are using a Beta-Bernoulli distribution. We further show that this construction induces a tractable Logistic-Beta prior over the edge logits.
    \item To mitigate mode seeking behavior in evidence lower bound (ELBO) optimization, we use stein variational gradient descent (SVGD) to update node potentials using kernel in acyclicity space. This improves mode coverage and quantification of epistemic uncertainty.
\end{itemize}

\section{Related Works}
\label{sec:related_works}
Given the vast literature on causal discovery~\citep{squires2023causal}, we focus exclusively on approaches that pose causal discovery as an optimization problem~\citep{vowels2022d}.

\paragraph{Causal Discovery.}
A foundational shift in causal discovery began with NOTEARS~\citep{zheng2018dags}, which reformulated combinatorial DAG search as a continuous optimization problem. This idea was extended to nonlinear settings using graph neural networks~\citep{yu2019dag}, gradient based neural DAG learning~\citep{lachapelle2019gradient}, and graph autoencoder architectures~\citep{ng2019graph}. Further improvements include DAGs with No Curl~\citep{yu2021dags} which improves efficiency via a curl free condition, 
and methods using acyclicity such as DAGMA~\citep{bello2022dagma}. Complementary approaches introduced masked gradients~\citep{ng2022masked} or analyzed interaction of sparsity and acyclicity penalties~\citep{ng2020role} while NoDAGs-Flow~\citep{sethuraman2023nodags} relaxed the acyclicity assumption entirely, learning nonlinear cyclic causal structures via normalizing flows. 
On combinatorial side, BOSS~\citep{andrews2023fast} demonstrated that efficient order score search can rival continuous relaxation methods in speed and accuracy, while \citet{ban2024differentiable} proposed differentiable structure learning over partial topological orderings that flexibly interpolates between unconstrained and fully ordered search. 
Bridging constraint based and score based paradigms, \citet{zhou2025differentiable} reformulated conditional independence testing differentiably, allowing end-to-end gradient based learning with statistical guarantees of constraint based methods. 
These methods demonstrate strong empirical performance but typically return point estimates and do not quantify uncertainty over graph structures.

\paragraph{Bayesian Causal Discovery.}
For Bayesian approaches, early differentiable methods include D-VAE~\citep{zhang2019d}, which encodes DAGs in a latent space via variational autoencoders. VCN~\citep{annadani2021variational} and BCD Nets~\citep{cundy2021bcd} introduced variational inference (VI) formulations over graph structures, while DiBS~\citep{lorch2021dibs} proposed a differentiable particle based framework. Concurrent lines of work explored differentiable DAG sampling~\citep{charpentier2022differentiabledag}, tractable marginal uncertainty computation~\citep{wang2022tractable}, and amortized inference for both static~\citep{lorch2022amortized} and temporal~\citep{lowe2022amortized} settings. GFlowNets emerged as a powerful alternative, allowing diverse posterior sampling over structures~\citep{deleu2022bayesian} and joint inference over graphs and parameters~\citep{nishikawa2022bayesian, deleu2023joint}. More recently, BayesDAG~\citep{annadani2023bayesdag} introduced gradient based posterior inference with improved scalability, ProDAG~\citep{thompson2024prodag} proposed projected VI to enforce acyclicity, and meta learning formulations~\citep{dhir2024meta} have been explored to generalize across tasks.

\section{Methodology}
\label{sec:methodology}

\subsection{Preliminaries and Notations}
\label{subsec:preliminaries}

\subsubsection{Causal Graph and Structural Causal Model}

Let \(\mathcal{G}=(V,E)\) be a DAG on the variable set \(V=\{X_1,\dots,X_m\}\) where $X_i$ represents a random variable.
Let \(A\in\{0,1\}^{m\times m}\) be the adjacency matrix encoding the causal relationships among the \(m\) variables with entries  $A_{ij} = 1$ if there is a directed edge $X_j \to X_i \text{ in } \mathcal{G}$ and 0 otherwise.
Let \(\mathcal{X}_i \subseteq \mathbb{R}\) denote the sample space for node \(X_i\), and define the joint sample space
\(\mathcal{X}= \mathcal{X}_1\times\cdots\times \mathcal{X}_m\).
Let \(\mathcal{D}=\{x^{(n)}\}_{n=1}^N\) be the dataset, where each observation is a tuple
\(x^{(n)}=(x^{(n)}_1,\dots,x^{(n)}_m)\in \mathcal{X}\) with \(x^{(n)}_i\in \mathcal{X}_i\). A structural causal model (SCM) states that each variable is generated as a function of its parents and an exogenous noise.
We denote by $X_{{\pa}(i)}$ the set of parental nodes of $X_i$ so that there is an edge from $X_j \in X_{{\pa}(i)}$ to $X_i$ in DAG $\mathcal{G}$.
Formally, for each node $i\in\{1,\dots,m\}$, 
$$
X_i = f_i\bigl(X_{{\pa}(i)}; \theta_i\bigr) + \epsilon_i,
$$
where $f_i(\cdot;\theta_i)$ is a deterministic function with parameters $\theta_i$ and
and  $\epsilon_i$'s are independent noise variables with strictly positive densities w.r.t Lebesgue measure. 
If noise variables are Gaussian and functions $f_i$ are not linear or constant,
then additive noise model (ANM) is structurally identifiable \citep{peters2014causal,hoyer2008nonlinear}. Causality typically assumes structural
assignments do not form cycles and they induce a DAG \citep{pearl2009causality}.

\subsubsection{Bayesian Structure Learning}

Our goal is to approximate the posterior distribution over all possible DAGs \(A\) and associated parameters
\(\theta = \{\theta_1,\dots,\theta_{m}\}\) in a neural network based SCM (NCM). Specifically, we target the posterior over structures and parameters
$
p(A,\theta\mid \mathcal{D})\ \propto\ p(\mathcal{D}\mid A,\theta)\,p(\theta\mid A)\,p(A)
$
with a prior over parameters $p(\theta\mid A)$ and graphs $p(A)$ \citep{friedman2003being}.
The likelihood is:
\[
p(\mathcal{D}\mid A,\theta)
=
\prod_{n=1}^{N}\ \prod_{i=1}^m p\bigl(x_i^{(n)}\mid x_{\pa(i)}^{(n)};\theta_i\bigr),
\]
where the dependency on $A$ is implicit in the parent index sets $\pa(i)$.
Concretely, assuming the noise is Gaussian,
$
p(X_i \mid X_{\pa(i)}, \theta_i) \;=\; \mathcal{N}\bigl(X_i \mid f_i(X_{\pa(i)}; \theta_i), \sigma_i^2\bigr).
$
Each \(\theta_i\) determines conditional distribution of \(X_i\) given its parents respecting the DAG structure \(A\). 

\subsection{Our Approach}

In this section, we detail each component of the $\texttt{SVI-DAG}$ algorithm which is specifically designed to address some of the critical challenges in causal structure learning.

\subsubsection{Domain Informed Prior over Adjacency Matrix}

We start by defining a prior over the DAGs. A common choice is 
$
B_{ij}\sim \mathrm{Bernoulli}(p_{ij}).
$
The parameters \(p_{ij}\) encode prior beliefs about edge presence. To improve robustness to misspecified prior beliefs, we introduce a hierarchical prior over \(B\). For each ordered pair \((i,j)\) with \(i\neq j\), we introduce \(\pi_{ij}\in(0,1)\) such that
$p(B_{ij}\mid \pi_{ij}) = \pi_{ij}^{B_{ij}} (1-\pi_{ij})^{1-B_{ij}}.$
Instead of fixing \(\pi_{ij}\) to the prior value \(p_{ij}\),
we place a hyper prior on \(\pi_{ij}\) using a Beta distribution $\pi_{ij}\sim \mathrm{Beta}\Bigl(\alpha_{ij},\beta_{ij}\Bigr)$. We choose hyperparameters so that Beta-density mode is \(p_{ij}\) given by
\begin{equation*}
\alpha_{ij} = \nu_{ij} p_{ij} + 1,\quad 
\beta_{ij} = \nu_{ij}(1 - p_{ij}) + 1,\quad 
\nu_{ij} = \nu_{\min} + \kappa \,\lvert p_{ij} - 0.5\rvert^{\eta}\ \text{for } i \neq j,
\end{equation*}
where \(\nu_{ij} > 0\) is a concentration parameter that is strictly increasing in \(\lvert p_{ij}-0.5\rvert\) when \(\kappa>0\) and \(\nu_{\min}\), \(\kappa\), and \(\eta\) are constants. 
This choice assigns higher concentration to nominal beliefs farther from 0.5 (see Figure \ref{fig:adaptive_domain_prior}). 
As \(\nu_{ij}\to\infty\), the marginal Beta--Bernoulli prior over each free edge $B_{ij}$ converges to a \(\mathrm{Bernoulli}(p_{ij})\) prior (see Proposition~\ref{prop:beta-bernoulli-concentration} in appendix \ref{app:supporting_statements_and_proofs}).
Let \(\Pi = \{\pi_{ij}\}_{i\neq j}\). We 
sample edges as \(B_{ij}\sim \mathrm{Bernoulli}(\pi_{ij})\), which gives the joint prior as
\begin{equation*}
p(B,\Pi) \propto \prod_{i\neq j} \mathrm{Bernoulli}(B_{ij}\mid\pi_{ij})\, \mathrm{Beta}\Bigl(\pi_{ij}\mid\nu_{ij}\,p_{ij}+1,\;\nu_{ij}\,(1-p_{ij})+1\Bigr).
\end{equation*}
\begin{wrapfigure}[17]{r}{0.4\textwidth}
    \vspace{0em}
    \centering
    \scalebox{0.4}{
        \begin{tikzpicture}[scale=1.2, >=stealth]

\pgfplotsset{compat=1.18}

\definecolor{C0}{HTML}{1f77b4}
\definecolor{C1}{HTML}{ff7f0e}
\definecolor{C2}{HTML}{2ca02c}
\definecolor{C3}{HTML}{d62728}
\definecolor{C4}{HTML}{9467bd}
\definecolor{C5}{HTML}{8c564b}
\definecolor{C6}{HTML}{e377c2}

\begin{axis}[
    width=10.5cm,
    height=7.8cm,
    xmin=0, xmax=1,
    ymin=0,
    domain=0.001:0.999,
    samples=400,
    smooth,
    grid=major,
    grid style={dashed, opacity=0.6},
    axis line style={line width=1.2pt},
    tick style={line width=1pt},
    ticklabel style={font=\small},
    xlabel={$p$},
    ylabel={Density},
    xlabel style={font=\Large},
    ylabel style={font=\Large},
    title={$\nu = \nu_{\min} + \kappa |p - 0.5|^{\eta}, (\nu_{min} = 1, \kappa = 500, \eta = 2)$},
    title style={font=\Large, yshift=1ex},
    legend columns=2,
    legend style={
        at={(0.5,-0.22)},
        anchor=north,
        draw=black,
        font=\normalsize,
        row sep=2pt,
        column sep=6pt
    },
    cycle list={
        {C0, line width=1.8pt},
        {C1, line width=1.8pt},
        {C2, line width=1.8pt},
        {C3, line width=1.8pt},
        {C4, line width=1.8pt},
        {C5, line width=1.8pt},
        {C6, line width=1.8pt}
    },
    declare function={
        lgamma(\x) = (\x-0.5)*ln(\x) - \x + 0.5*ln(2*pi) + 1/(12*\x);
        betapdf(\x,\a,\b) = exp((\a-1)*ln(\x) + (\b-1)*ln(1-\x) - lgamma(\a) - lgamma(\b) + lgamma(\a+\b));
    }
]

\addplot {betapdf(x, 6.1125, 98.1375)};
\addlegendentry{$p=0.05,\ \nu=102.2$}

\addplot {betapdf(x, 9.1, 73.9)};
\addlegendentry{$p=0.1,\ \nu=81.0$}

\addplot {betapdf(x, 7.3, 15.7)};
\addlegendentry{$p=0.3,\ \nu=21.0$}

\addplot {betapdf(x, 1.5, 1.5)};
\addlegendentry{$p=0.5,\ \nu=1.0$}

\addplot {betapdf(x, 15.7, 7.3)};
\addlegendentry{$p=0.7,\ \nu=21.0$}

\addplot {betapdf(x, 73.9, 9.1)};
\addlegendentry{$p=0.9,\ \nu=81.0$}

\addplot {betapdf(x, 98.1375, 6.1125)};
\addlegendentry{$p=0.95,\ \nu=102.2$}

\end{axis}
\end{tikzpicture}
    }
    \caption{\small Adaptive domain informed prior over adjacency matrix}
    \label{fig:adaptive_domain_prior}
\end{wrapfigure}
Marginalizing over \(\pi_{ij}\) recovers a Beta-Bernoulli model over \(B_{ij}\).
Since causal graphs are acyclic, we enforce acyclicity in the prior by construction~\citep{annadani2023bayesdag}.
We introduce a binary free edge matrix \(B\in\{0,1\}^{m\times m}\) with zero diagonal and a vector of node potentials \(r=(r_1,\dots, r_m)^T\in\R^m\). 
Let's define acyclic mask and induced adjacency by
\begin{equation*}
\label{eq:mask-and-adjacency}
M(r) = P(r)\,L\,P(r)^T,
\qquad
A = B \odot M(r),
\end{equation*}
where \(\odot\) denotes the Hadamard product. For any $(B,r)$, $A$ is a DAG (see Theorem \ref{thm:acyclic_construction} in appendix). Moreover, for any DAG $A$,  there always exists a corresponding pair $(B,r)$ such that $A = B \odot M(r)$ \citep{annadani2023bayesdag,yu2021dags}.
We therefore place joint prior on the free edges and potentials
$p(B,\Pi,r) \propto p(B,\Pi)\mathcal{N}(r\mid0,\sigma_r^2\Id)$.
This construction enforces acyclicity, so no additional acyclicity indicator or penalty is required.
The permutation \(P(r)\) is well defined when entries of \(r\) are pairwise distinct. Since \(p(r)\) is Gaussian, this condition holds almost surely.
By analytically marginalizing over $\Pi$ (see Derivation \ref{der:derivation_adjacency_prior} in appendix \ref{app:theory}), we have
\begin{equation*}
\label{eq:p_B_r_main}
p(B,r)
=\left(\prod_{i\neq j}\frac{\mathcal{B}\!\left(B_{ij}+\alpha_{ij},\,\beta_{ij}+1-B_{ij}\right)}{\mathcal{B}\!\left(\alpha_{ij},\beta_{ij}\right)}\right)\,
\mathcal{N}\!\left(r\mid0,\sigma_r^2\Id\right),
\end{equation*}
where \(\mathcal{B}\) denotes the Beta function. 
Under this formulation, \(p_{ij}\) is the nominal mode of the latent edge probability \(\pi_{ij}\), not the exact finite $\nu_{ij}$ marginal probability of $B_{ij}$. The latter is 
$\alpha_{ij}/(\alpha_{ij}+\beta_{ij})$ which approaches $p_{ij}$ as $\nu_{ij} \rightarrow \infty$.

\subsubsection{Prior over NCM Parameters conditioned on the Adjacency Matrix}

Next, conditional on $A$, we place a prior on NCM parameters $\theta = \{\theta_1,\dots,\theta_{m}\}$. We consider a prior
\[
p(\theta\mid A)
\;=\;
\mathcal{N}\!\Bigl(\theta\mid\ \mu_p(A),\ \mathrm{diag}\!\bigl(\sigma_p^2(A)\bigr)\Bigr)
\;=\;
\prod_{k=1}^{d_\theta}\mathcal{N}\!\bigl(\theta_k\mid\ \mu_{p,k}(A),\ \sigma_{p,k}^2(A)\bigr),
\]
 where $d_\theta$ is the dimensionality and $\mu_p(A) \in \mathbb{R}^{d_\theta}$ and $\sigma_p(A) \in (0,\infty)^{d_\theta}$ are deterministic functions of $A$.  In the simplest case, these functions are constant (for example, zero mean and fixed variance), but the notation permits dependence on \(A\).
The joint prior factorizes as
$
p(A, \theta)
\;=\;
p(\theta \mid A)\; p(A).
$

\subsubsection{ELBO without Normalizing Flow}
\noindent 
By marginalizing \(\Pi\) in the prior to get \(p(B, r)\) (see Derivation~\ref{der:derivation_adjacency_prior} in appendix \ref{app:theory}) and rearranging terms (see Derivation~\ref{prop:derivation_elbo} in appendix \ref{app:theory}), we have the ELBO as
\begin{equation*}
\begin{aligned}
\text{ELBO}(\phi)
&= \mathbb{E}_{q_\phi(B, r)} \Big[
      \underbrace{\mathbb{E}_{q_\phi(\theta \mid A)}
      [\log p(\mathcal{D} \mid A, \theta)]}_{\text{expected log likelihood}}
      - \underbrace{D_{\mathrm{KL}}\!\left(q_\phi(\theta \mid A) \,\|\, p(\theta \mid A)\right)}_{\text{KL divergence for }\theta}
   \Big] \\[4pt]
&\quad
 -\, \underbrace{D_{\mathrm{KL}}\!\left(q_\phi(B, r) \,\|\, p(B, r)\right)}_{\text{KL divergence for }B\text{ and }r}.
\end{aligned}
\label{eq:elbo-Bp}
\end{equation*}
where \(q_\phi(\theta \mid A)\) and \(q_\phi(B,r)\) denotes the conditional and structural guides.
\begin{restatable}{remark}{remark-ELBO}
Note that the above ELBO is not equivalent to the ELBO defined using \(q_\phi(A)\) under deterministic map \(A=B\odot M(r)\) because the pushforward from \((B,r)\) to \(A\) is many-to-one. If \(q_\phi(A)\) and \(p(A)\) are defined as the pushforwards of
\(q_\phi(B,r)\) and \(p(B,r)\), then Proposition~\ref{prop:data-processing} implies $\KL\!\bigl(q_\phi(B,r)\,\|\,p(B,r)\bigr)
\ge
\KL\!\bigl(q_\phi(A)\,\|\,p(A)\bigr)$.
Consequently, an ELBO written directly in terms of \(A\) is at least as tight as the ELBO in \((B,r)\) space. However, the induced prior \(p(A)\) is generally intractable, whereas \(p(B,r)\) admits a simple factorized form. We therefore work with Eq.~\ref{eq:ELBO-br} for computational tractability, even though ELBO in \(A\) space would be tighter.
\end{restatable}

\subsection{Soft Relaxation for Adjacency Matrix and Acyclicity}

Since $B$ is discrete, we cannot differentiate with respect to \(B_{ij}\). We therefore introduce a continuous relaxation for \(B\) via the Gumbel-Softmax trick \citep{jang2016categorical, maddison2016concrete, ng2022masked}. For each \((i,j)\) with \(i\neq j\), we introduce a learnable edge logit \(\gamma_{ij}\in\R\) and a temperature \(T>0\). We draw logistic noise $R_{ij}\ \stackrel{\mathrm{iid}}{\sim}\ \mathrm{Logistic}(0,1)$ implemented as $R_{ij}=\log U_{ij}-\log(1-U_{ij}),\ \ U_{ij}\stackrel{\mathrm{iid}}{\sim}\mathrm{Uniform}(0,1)$.
Equivalently, $R_{ij}$ can be generated as the difference of two i.i.d.\ \texttt{Gumbel}$(0,1)$ variables.
We use the same noise to define both a hard Bernoulli sample and its differentiable relaxation:
\begin{equation}
\label{eq:binary-concrete-sample}
B_{ij}=\mathbf{1}\{\gamma_{ij}+R_{ij}>0\},
\qquad
\tilde{B}_{ij}=\sigmoid\!\Bigl((\gamma_{ij}+R_{ij})/T\Bigr),
\qquad
\sigmoid(x)=1/(1+\mathrm{e}^{-x}),
\end{equation}
where \(B_{ii}=\tilde B_{ii}=0\). Then \(B_{ij}\sim\mathrm{Bernoulli}(\sigmoid(\gamma_{ij}))\), while \(\tilde{B}_{ij}\in(0,1)\) converges pointwise to \(B_{ij}\) as \(T\to0\), except on the probability-zero event \(\gamma_{ij}+R_{ij}=0\) \citep{ng2022masked}.
For the order, let \(P(r)\) be the hard permutation obtained by sorting \(r\). We also define a backward surrogate through the Sinkhorn operator \(\mathcal{S}\) \citep{adams2011ranking}. For \(\tau>0\), let \(S_0(r)=r\,\mathbf{o}^T\), where \(\mathbf{o}=(m,m-1,\dots,1)^T\), and applying \(K_S\) alternating row and column normalizations to \(\exp(S_0(r)/\tau)\) we have 
\begin{equation*}
\label{eq:P-and-M-tau}
P_{\tau,K_S}(r)=\mathcal S_{K_S}\!\left(\exp(S_0(r)/\tau)\right),
\qquad
M_{\tau,K_S}(r)=P_{\tau,K_S}(r)\,L\,P_{\tau,K_S}(r)^T.
\end{equation*}
Double stochasticity is guaranteed in the \(K_S\to\infty\) limit. More importantly, even an exactly doubly stochastic nonpermutation matrix generally makes \(M_{\tau,K_S}(r)\) dense, so this relaxed order mask is not generally acyclic and can have positive entries in both directions. Lemma~\ref{lem:sinkhorn-limit} (see appendix \ref{app:supporting_statements_and_proofs}) states that  \(P_{\tau,K_S}(r)\to P(r)\) and \(M_{\tau,K_S}(r)\to M(r)\) as \(\tau\to 0\) and \(K_S\to\infty\).
We distinguish the hard DAG from its soft backward surrogate as
\begin{equation}
\label{eq:hard-soft-st-adjacency}
A=B\odot M(r),\qquad
\tilde A=\tilde B\odot M_{\tau,K_S}(r)\odot(\Ones-\Id),\qquad
A_{\mathrm{ST}}=\tilde A+\operatorname{sg}(A-\tilde A),
\end{equation}
where \(M(r)=P(r)LP(r)^T\) and \(\operatorname{sg}\) is stop-gradient. Thus \(A_{\mathrm{ST}}=A\) in the forward pass and every evaluated graph is a binary DAG by Theorem~\ref{thm:acyclic_construction} (see appendix \ref{app:supporting_statements_and_proofs}). The diagonal factor removes self-loops from \(\tilde A\), but does not make the finite-temperature relaxation acyclic. 

\subsection{Forward Model}

\begin{wrapfigure}[12]{r}{0.4\textwidth}
    \vspace{-5em}   
    \centering
    \scalebox{0.38}{
        \begin{tikzpicture}[scale=1.2, >=stealth]

\pgfplotsset{compat=1.18}

\definecolor{C0}{HTML}{1f77b4}
\definecolor{C1}{HTML}{ff7f0e}
\definecolor{C2}{HTML}{2ca02c}
\definecolor{C3}{HTML}{d62728}
\definecolor{C4}{HTML}{9467bd}
\definecolor{C5}{HTML}{8c564b}
\definecolor{C6}{HTML}{e377c2}

\begin{axis}[
    width=10.5cm,
    height=7.8cm,
    xmin=-6, xmax=6,
    ymin=0, ymax=1.2,
    domain=-6:6,
    samples=600,
    smooth,
    grid=major,
    grid style={dashed, opacity=0.6},
    axis line style={line width=1.2pt},
    tick style={line width=1pt},
    ticklabel style={font=\small},
    xtick={-6,-4,...,6},
    ytick={0,0.2,...,1.2},
    xlabel={$\gamma$},
    ylabel={$p(\gamma)$},
    xlabel style={font=\Large},
    ylabel style={font=\Large},
    title={
        $\nu = \nu_{\min} + \kappa |p - 0.5|^{\eta},
        \ (\nu_{\min}=1,\ \kappa=500,\ \eta=2)$
    },
    title style={font=\Large, yshift=1ex},
    legend columns=2,
    legend cell align={left},
    legend style={
        at={(0.5,-0.22)},
        anchor=north,
        draw=black,
        font=\normalsize,
        row sep=2pt,
        column sep=6pt
    },
    cycle list={
        {C0, line width=1.8pt},
        {C1, line width=1.8pt},
        {C2, line width=1.8pt},
        {C3, line width=1.8pt},
        {C4, line width=1.8pt},
        {C5, line width=1.8pt},
        {C6, line width=1.8pt}
    },
    declare function={
        %
        %
        logisticbetapdf(\g,\a,\b,\lb) =
            exp(
                -\a*ln(1 + exp(-\g))
                -\b*ln(1 + exp(\g))
                -\lb
            );
    }
]

%
%

\addplot {
    logisticbetapdf(
        x,
        6.1125,
        98.1375,
        -23.20987847091072
    )
};
\addlegendentry{$p=0.05,\ \nu=102.2$}

\addplot {
    logisticbetapdf(
        x,
        9.1,
        73.9,
        -28.81585261438519
    )
};
\addlegendentry{$p=0.1,\ \nu=81.0$}

\addplot {
    logisticbetapdf(
        x,
        7.3,
        15.7,
        -14.24340032915957
    )
};
\addlegendentry{$p=0.3,\ \nu=21.0$}

\addplot {
    logisticbetapdf(
        x,
        1.5,
        1.5,
        -0.9347116558304359
    )
};
\addlegendentry{$p=0.5,\ \nu=1.0$}

\addplot {
    logisticbetapdf(
        x,
        15.7,
        7.3,
        -14.24340032915957
    )
};
\addlegendentry{$p=0.7,\ \nu=21.0$}

\addplot {
    logisticbetapdf(
        x,
        73.9,
        9.1,
        -28.81585261438519
    )
};
\addlegendentry{$p=0.9,\ \nu=81.0$}

\addplot {
    logisticbetapdf(
        x,
        98.1375,
        6.1125,
        -23.20987847091072
    )
};
\addlegendentry{$p=0.95,\ \nu=102.2$}

%

\addplot[
    C0,
    dotted,
    line width=1.8pt,
    forget plot
]
coordinates {
    (-2.9444389792,0)
    (-2.9444389792,1.2)
};

\addplot[
    C1,
    dotted,
    line width=1.8pt,
    forget plot
]
coordinates {
    (-2.1972245773,0)
    (-2.1972245773,1.2)
};

\addplot[
    C2,
    dotted,
    line width=1.8pt,
    forget plot
]
coordinates {
    (-0.8472978604,0)
    (-0.8472978604,1.2)
};

\addplot[
    C3,
    dotted,
    line width=1.8pt,
    forget plot
]
coordinates {
    (0,0)
    (0,1.2)
};

\addplot[
    C4,
    dotted,
    line width=1.8pt,
    forget plot
]
coordinates {
    (0.8472978604,0)
    (0.8472978604,1.2)
};

\addplot[
    C5,
    dotted,
    line width=1.8pt,
    forget plot
]
coordinates {
    (2.1972245773,0)
    (2.1972245773,1.2)
};

\addplot[
    C6,
    dotted,
    line width=1.8pt,
    forget plot
]
coordinates {
    (2.9444389792,0)
    (2.9444389792,1.2)
};

\addlegendimage{
    black,
    dotted,
    line width=1.8pt
}
\addlegendentry{$\operatorname{logit}(p)$}

\end{axis}
\end{tikzpicture}
    }
    \caption{\small Induced prior over edge logits}
    \label{fig:prior_edge_logits}
\end{wrapfigure}

We now define forward model using Bayesian MLP for node \(i\) that uses the parent \(X_{\pa(i)}\) as input and outputs a prediction \(\hat{X}_i\). Formally,
$
\hat{X}_i \;=\; f_i\bigl(X_{\pa(i)};\,\theta_i\bigr).
$
To incorporate adjacency, let $X=(X_1,\dots,X_m)^T$ and let $(A_{\mathrm{ST}})_i$ denote the $i^\text{th}$ row of the straight-through matrix in Eq.~\ref{eq:hard-soft-st-adjacency}. 
We define
\[
g_i(x;\theta_i)=f_i\bigl((A_{\mathrm{ST}})_i\odot x;\theta_i\bigr).
\]
Since $A_{\mathrm{ST}}=A$ in the forward pass, the evaluated input zeros every nonparent exactly and all likelihood values correspond to a hard DAG. During backpropagation, derivatives are routed through the generally cyclic relaxation $\tilde A$. These straight through derivatives are biased surrogate gradients. 

\subsection{ELBO with Normalizing Flow}

\noindent In many applications, the assumption that $q_\phi(B)$ factorizes independently across edges can be restrictive. 
SVI \citep{rezende2015variational} addresses this limitation by constructing a more expressive \(q_\phi(B)\).
We define a standard base distribution \(p_0(z)\) for edge latent variables. We introduce an invertible conditional map $T_\phi: \mathbb{R}^{m(m-1)} \times \mathbb{R}^m \to \mathbb{R}^{m(m-1)}$, parameterized by $\phi$. The transformation gives $\gamma$ as a function of $z$ and $r$ given by $\gamma = T_\phi(z; r)$.
By rearrangement (see Derivation~\ref{prop:derivation_elbo_flow} in appendix \ref{app:theory}), we have the revised ELBO as
\begin{equation}
\label{eq:ELBO-flow-general-main}
\scalebox{0.85}{$
\begin{aligned}
\mathrm{ELBO}(\phi,\psi)
&=
\mathbb{E}_{r\sim q_\psi(r)}\;
\mathbb{E}_{z\sim p_0}\;
\mathbb{E}_{B\sim p(\cdot\mid T_\phi(z; r))}
\Bigl[
\underbrace{
\mathbb{E}_{q_\phi(\theta\mid A)}
\bigl[\log p(\mathcal{D}\mid A,\theta)\bigr]
}_{\text{expected log likelihood}}
-
\underbrace{
\KL\!\bigl(q_\phi(\theta\mid A)\,\|\,p(\theta\mid A)\bigr)
}_{\text{KL divergence for }\theta} \\
&\quad
+\ 
\log\!\Bigl|\det \frac{\partial T_\phi(z; r)}{\partial z}\Bigr|
-\log p_0(z)
+\log p\bigl(T_\phi(z; r)\bigr)
\Bigr]
\ -\ \underbrace{\KL\!\bigl(q_\psi(r)\,\|\,p(r)\bigr)}_{\text{KL divergence for potentials}}.
\end{aligned}
$}
\end{equation}
We define \(T_\phi\) as either a single flow transformation or a composition \(T_{\phi_K}^{(K)} \circ \cdots \circ T_{\phi_1}^{(1)}\). In compositional case, the log determinant term in above equation decomposes as
\begin{equation*}
\label{eq:normalizing_flow_sum}
\log\left|\det \frac{\partial T_\phi(z; r)}{\partial z}\right|=\sum_{k=1}^{K}\log\left|\det \frac{\partial T_{\phi_k}(z^{(k-1)};r)}{\partial z^{(k-1)}}\right|,\qquad z^{(k)}=T_{\phi_k}(z^{(k-1)};r),\ z^{(0)}=z.
\end{equation*}
Under Derivation~\ref{der:logistic-beta-prior} (see appendix \ref{app:theory}) and the modeling choice that model and guide share the hard conditional \(p(B\mid\gamma)\), the domain informed prior now enters through prior over edge logits \(p(\gamma)\) (see Figure \ref{fig:prior_edge_logits}) as
\begin{equation*}
\label{eq:logistic-beta-prior-main}
p(\gamma)
=
\prod_{i\neq j}
\frac{1}{\mathcal{B}(\alpha_{ij},\beta_{ij})}\,
\sigmoid(\gamma_{ij})^{\alpha_{ij}}\,
\bigl(1-\sigmoid(\gamma_{ij})\bigr)^{\beta_{ij}},
\qquad
\sigmoid(x)=1/(1+e^{-x}).
\end{equation*}
\noindent
\begin{restatable}{remark}{remark-prior}
Had we instead placed a Bernoulli prior $B_{ij}\sim\mathrm{Bernoulli}(p_{ij})$ on each edge, the edge probability would be deterministic (i.e., $\pi_{ij}=p_{ij}$). Equivalently, $p(\pi_{ij})=\delta(\pi_{ij}-p_{ij})$, and the logit transformation would induce $ p(\gamma_{ij}) = \delta\!\left(\gamma_{ij}-\operatorname{logit}(p_{ij})\right). $ Hence, a fixed Bernoulli prior induces a degenerate prior over $\gamma$ rather than an absolutely continuous density. This is not suitable to our formulation, since $q_\phi(\gamma\mid r)$ is continuous---so the KL divergence $\KL\!\left(q_\phi(\gamma\mid r)\,\|\,p(\gamma)\right)$ to this degenerate prior diverges unless the flow collapses to the same point mass. The Beta prior over $\pi_{ij}$ avoids this issue and gives the Logistic-Beta density. We further note that although the prior over \(\pi_{ij}\) is parameterized to have mode \(p_{ij}\), this mode is not preserved under the nonlinear logit transformation. In particular, \(\operatorname{logit}(p_{ij})\) is the image of the mode in \(\pi_{ij}\) space, whereas the mode of the Logistic-Beta density over \(\gamma_{ij}\) is $\log({\alpha_{ij}/\beta_{ij}})$. 
Consequently, the peaks of \(p(\gamma)\) need not align exactly with \(\operatorname{logit}(p_{ij})\), as seen in Figure~\ref{fig:prior_edge_logits}.
\end{restatable}

A single sample Monte Carlo (MC) estimator of revised ELBO is unbiased and practical in high dimension. 
The straight through backward pass is generally a biased gradient estimator.
If both \(q_\phi(\theta\mid A)\) and \(p(\theta\mid A)\) are diagonal Gaussians with strictly positive component scales, their KL divergence has the closed form (see Derivation \ref{der:KL-gaussian-diag} in appendix \ref{app:theory}) as
\begin{equation}
\label{eq:KL_closed_form}
\scalebox{0.99}{$
\KL\!\bigl(q_\phi(\theta\mid A)\,\|\,p(\theta\mid A)\bigr)
=\frac{1}{2}\sum_{k=1}^{d_\theta}\left[
\frac{\sigma_{\phi,k}^{2}(A)}{\sigma_{p,k}^{2}(A)}
+\frac{\bigl(\mu_{\phi,k}(A)-\mu_{p,k}(A)\bigr)^{2}}{\sigma_{p,k}^{2}(A)}
-1
+2\log\frac{\sigma_{p,k}(A)}{\sigma_{\phi,k}(A)}
\right].
$}
\end{equation} 

\subsection{Updating Node Potentials with SVGD using a Kernel defined in Relaxed Acyclicity Space}

Note that, even with a dependent guide, standard VI maximizes the ELBO, which is equivalent to minimizing the reverse KL divergence \(\KL(q_\phi \| p)\). For multimodal posteriors, this reverse KL objective can be mode-seeking~\citep{blei2017variational,zhang2018advances}.
In particular, it encourages the approximation to concentrate on a single mode, largely to avoid allocating mass to low-density posterior regions.
Consequently, although a mixture-of-Gaussians guide for $r$ could, in principle, represent multiple modes, standard ELBO optimization would still not guarantee recovery of all posterior modes. To promote mode coverage, we replace the parametric guide over \(r\) with a nonparametric particle approximation and update the particles using SVGD~\citep{liu2016stein}.
If we condition the revised ELBO (see Equation~\ref{eq:ELBO-flow-general-main}) on a fixed \(r\), we have
\begin{equation}
\scalebox{0.95}{$
\begin{aligned}
\mathcal L(\phi;r)
&=
\mathbb{E}_{z\sim p_0}\;
\mathbb{E}_{B\sim p(\cdot\mid T_\phi(z; r))}
\Bigl[
\underbrace{
\mathbb{E}_{q_\phi(\theta\mid A)}
\bigl[\log p(\mathcal{D}\mid A,\theta)\bigr]
}_{\text{expected log likelihood}}
-
\underbrace{
\KL\!\bigl(q_\phi(\theta\mid A)\,\|\,p(\theta\mid A)\bigr)
}_{\text{KL divergence for }\theta}\\
&\quad
+\ 
\log\!\Bigl|\det \frac{\partial T_\phi(z; r)}{\partial z}\Bigr|
-\log p_0(z)
+\log p\bigl(T_\phi(z; r)\bigr)
\Bigr].
\end{aligned}
$}
\label{eq:conditional-elbo}
\end{equation}
This constitutes a genuine lower bound on the conditional log evidence \(\mathcal L(\phi;r)\leq\log p(\mathcal D\mid r)\). More precisely, defining
$\Delta_\phi(r)=\KL\!\left(q_\phi(\gamma,B,\theta\mid r)\,\middle\|\,p(\gamma,B,\theta\mid\mathcal D,r)\right)$,
the standard variational identity gives \(\log p(\mathcal D\mid r)-\mathcal L(\phi;r)=\Delta_\phi(r)\geq0\). To introduce repulsive diversity pressure over node orders without assigning a density to a finite set of particles, we construct a proper target directly from the fixed \(r\) conditional bound. Specifically, for a fixed \(\phi\), we define
\begin{equation*}
\label{eq:tilted-target}
Z_\phi=\int p(r)\exp\{\mathcal L(\phi;r)\}\,dr,
\qquad
\tilde{p}_\phi(r\mid\mathcal D)=Z_\phi^{-1}p(r)\exp\{\mathcal L(\phi;r)\}.
\end{equation*}
If \(0<p(\mathcal D)<\infty\) and \(r\mapsto\mathcal L(\phi;r)\) is measurable, the nonnegativity \(\Delta_\phi(r)\ge 0\) implies \(0<Z_\phi\leq p(\mathcal D)\). Hence, $\tilde{p}_\phi(r\mid\mathcal D)$ constitutes a proper, lower-bound-induced target that relates to the exact potential posterior by $\tilde{p}_\phi(r\mid\mathcal D)=
p(r\mid\mathcal D)\exp\{-\Delta_\phi(r)\}/
{\mathbb E_{p(r\mid\mathcal D)}[\exp\{-\Delta_\phi(r)\}]}.$
Thus, \(\tilde{p}_\phi(r\mid\mathcal D)=p(r\mid\mathcal D)\) holds if and only if \(\Delta_\phi(r)\) is constant for \(p(r\mid\mathcal D)\) almost every \(r\) including, in particular, the special case of an exact conditional guide.
Assume that \(r\mapsto\mathcal L(\phi;r)\) is differentiable on the interior of each order chamber.
This requires the flow, likelihood, and parameter guide to be differentiable there and an integrable function that allows differentiation under the continuous expectations. The derivative of a sampled hard threshold is not this exact derivative. It would require an analytic finite sum or an unbiased discrete score estimator. Under these assumptions, away from tie hyperplanes, the score is $\nabla_r\log\tilde p_\phi(r\mid\mathcal D)
=\nabla_r\!\left[\mathcal L(\phi;r)+\log p(r)\right]$.
The hard permutation can make \(\mathcal L\) and \(\tilde{p}_\phi(r\mid\mathcal D)\) discontinuous across tie hyperplanes, even though these hyperplanes carry prior measure zero. We maintain a finite set \(\{r^{(k)}\}_{k=1}^{K_{\mathrm{part}}}\) purely as optimization variables. 
Holding these particles fixed, the exact conditional objective average that we optimize over \(\phi\) is $K_{\mathrm{part}}^{-1}\sum_{k=1}^{K_{\mathrm{part}}}\mathcal L(\phi;r^{(k)})$.
Notably, this is not a global ELBO for an empirical guide over \(r\).
Let's define the continuously differentiable, positive-definite Gaussian kernel $k_M(\psi,\psi')$  and let \(\psi(r)=\mathrm{vec}(M_{\tau,K_S}(r))\). 
Its pullback
$
k_r(r,r')=k_M(\psi(r),\psi(r'))
$
is, in turn, a continuously differentiable, positive-definite kernel on potential space. 
Applying the standard smooth-target SVGD formula to this score gives the particle update rule
\begin{equation}
\label{eq:svgd_update}
r^{(k)} \leftarrow r^{(k)} + \eta_r \frac{1}{K_\text{part}} \sum_{j=1}^{K_\text{part}} \left[ k_r(r^{(j)},r^{(k)}) \nabla_{r^{(j)}} \log \tilde{p}_\phi(r\mid\mathcal D) + \nabla_{r^{(j)}} k_r(r^{(j)},r^{(k)})\right].
\end{equation}
Here, \(\nabla_{r^{(j)}}k_r\) differentiates through \(\psi(r^{(j)})\). 
The second term in the bracket is the repulsive force that helps in mode coverage.

\subsection{Reparameterization Trick for Estimating Gradients}
In order to calculate gradients, let \(f_i\) be an \(L\)-layer MLP with parameters $\theta$ that contain weights and biases \(\{W^{(l)},b^{(l)}\}_{l=1}^{L}\) such that $W^{(l)}\in\mathbb{R}^{d_{\mathrm{out}}^{(l)}\times d_{\mathrm{in}}^{(l)}}$ and $b^{(l)}\in\mathbb{R}^{d_{\mathrm{out}}^{(l)}}$.
We approximate the conditional posterior over network parameters with a Gaussian guide. In software the hypernetwork $H_\phi$ receives \(A_{\mathrm{ST}}\) since its forward value is exactly \(A\), this evaluates the same density as \(q_\phi(\theta\mid A)\) while routing a surrogate derivative through \(\tilde A\):
\begin{equation*}
\label{eq:q-theta-hyper}
q_\phi(\theta\mid A_{\mathrm{ST}})
=
\mathcal{N}\!\Bigl(\theta \mid \mu_\phi(A_{\mathrm{ST}}),\ \mathrm{diag}\!\bigl(\sigma_\phi^2(A_{\mathrm{ST}})\bigr)\Bigr)
=
\prod_{k=1}^{d_\theta}\mathcal{N}\!\bigl(\theta_k \mid \mu_{\phi,k}(A_{\mathrm{ST}}),\ \sigma_{\phi,k}^2(A_{\mathrm{ST}})\bigr),
\end{equation*}
where $(\mu_\phi,\rho_\phi)=H_\phi(A_{\mathrm{ST}})$ and $\sigma_\phi=\mathrm{softplus}(\rho_\phi) = \log(1+\exp(\rho_\phi))$, which enforces positivity.
The conditioning is therefore explicit and differentiable. Changing $A_{\mathrm{ST}}$ changes the full variational posterior over $\theta$ through $H_\phi$.
\begin{equation*}
\theta=\mu_\phi(A_{\mathrm{ST}})+\sigma_\phi(A_{\mathrm{ST}})\odot\epsilon,
\qquad
\epsilon\sim\mathcal{N}(0,I_{d_\theta}),
\label{eq:theta-reparam-final}
\end{equation*}
which preserves differentiability through both \(H_\phi\) and the predictors \(f_i\). $\tilde{B}$ is reparameterized using the Gumbel-Sigmoid relaxation as shown in Eq.~\ref{eq:binary-concrete-sample}.
We refer the reader to Appendix \ref{app:pseudo_code} for the pseudo code and computational complexity of the \texttt{SVI-DAG} algorithm.

\section{Results and Discussion}

We evaluate \texttt{SVI-DAG} against five state-of-the-art (SOTA) Bayesian DAG learning methods on linear synthetic data, nonlinear synthetic data and real data using brier score, expected structural hamming distance (SHD), expected F1 score, and area under the receiver operating characteristic curve (AUROC). 
The rationale underlying the selection of these metrics is provided in appendix \ref{app:metrics}.
\paragraph{Datasets.} To isolate effect of domain informed prior, we use a two node graph with data generated from ANM with Gaussian noise, where small graph size makes role of prior tractable to analyze. For synthetic experiments at scale, we generate linear and nonlinear structural equation models over Erd\H{o}s--R\'enyi (ER) DAGs with 25 and 50 nodes. For real data, we evaluate on Sachs dataset. 

\paragraph{Baselines.} We focus our evaluation on Bayesian methods to DAG learning and compare with \texttt{ProDAG}~\citep{thompson2024prodag},
\texttt{BayesDAG}~\citep{annadani2023bayesdag}, \texttt{DDS}~\citep{charpentier2022differentiabledag}, \texttt{BCD} Nets~\citep{cundy2021bcd}, and \texttt{DiBS}~\citep{lorch2021dibs}. Appendix \ref{app:hyperparameters} describes the hyperparameters used for benchmarking these algorithms.

\begin{wrapfigure}{r}{0.3\textwidth}
    \centering
    \vspace{-10pt}
    \captionsetup{font=scriptsize}
    \includegraphics[width=\linewidth]{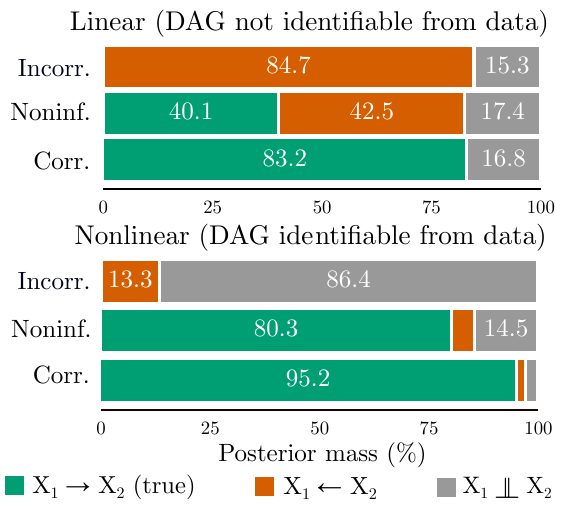}
    \vspace{-15pt}
    \caption{\scriptsize Effect of prior on 2 node graph}
    \label{fig:domain_prior_effect}
    \vspace{-10pt}
\end{wrapfigure}


\subsection{Effect of Domain Informed Prior}

Figure \ref{fig:domain_prior_effect} illustrates effect of prior on a 2 node graph using 1000 samples, chosen to balance influence of likelihood and prior. After optimization, we draw $10000$ hard posterior DAG samples $A = B \odot M(r)$, $r$ sampled uniformly from the SVGD particles, $\gamma$ from $q(\gamma \mid r)$, hard Bernoulli edges $B = \mathbf{1}\{\gamma + R > 0\}$ and the hard permutation mask $M(r) = P(r)\,L\,P(r)^{\top}$, so every sample is a binary DAG and report the proportion of samples assigned to \(x_1 \to x_2\), \(x_1 \gets x_2\), and \(x_1 \perp\!\!\!\perp x_2\). With ground truth \(x_1 \to x_2\), linear Gaussian setting (non-identifiable) shows that prior dominates orientation:
an incorrect prior (p = 0.01 correct, 0.99 incorrect) concentrates mass on the wrong direction (84.7\%), a noninformative prior (p = 0.5) gives balanced posterior, and correct prior (p = 0.99 correct, 0.01 incorrect) shifts mass to true direction (83.2\%). In nonlinear Gaussian setting (identifiable), prior acts primarily as regularizer: the incorrect prior suppresses both edges, concentrating 86.4\% on \(x_1 \perp\!\!\!\perp x_2\), while a correct prior sharpens posterior and assigns highest mass to \(x_1 \to x_2\) (95.2\%).

\subsection{Linear Synthetic Data}

\begin{figure*}[ht]
\centering
\includegraphics[width=\linewidth]{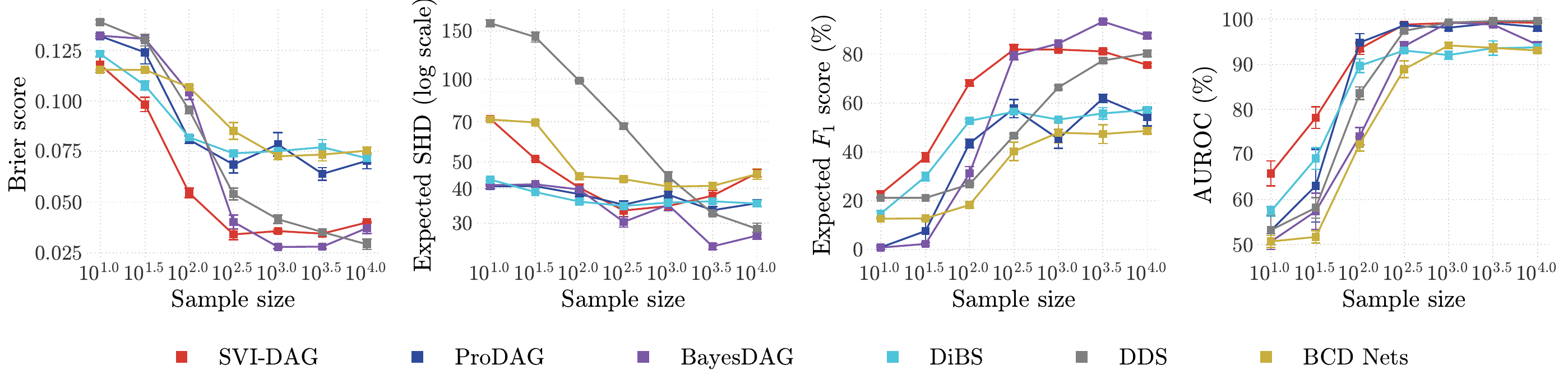}
\caption{Results on synthetic data generated from linear ER DAGs with $p=25$ nodes and $s=40$ edges under Gaussian noise. Each point denotes sample mean, and accompanying error bars indicate standard error, both computed across 5 independent draws of the dataset.}
\label{fig:synthetic-linear-25-40-erdos-renyi-gaussian}
\end{figure*}

\begin{figure*}[ht]
\centering
\includegraphics[width=\linewidth]{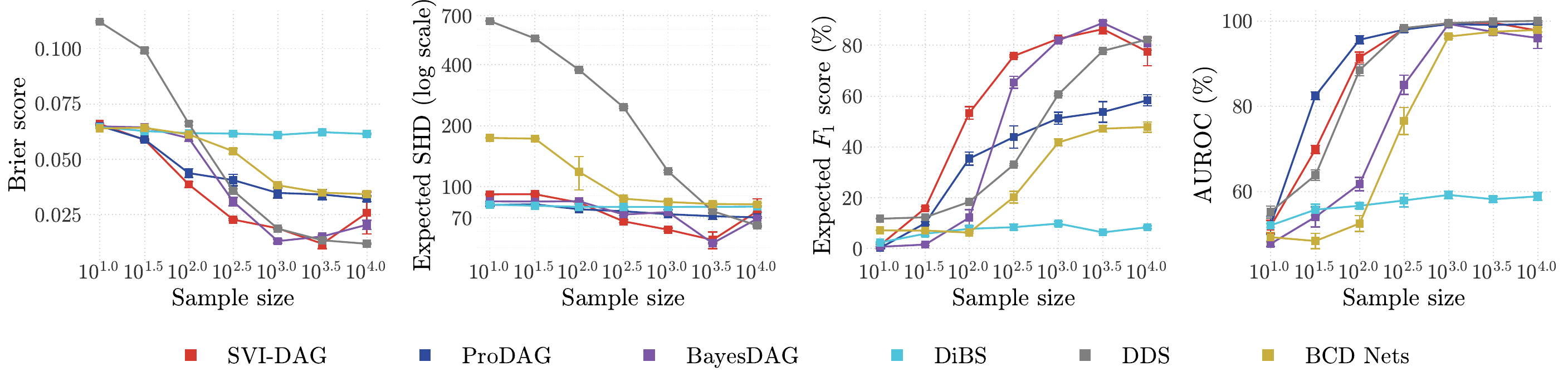}
\caption{Results on synthetic data generated from linear ER DAGs with $p=50$ nodes and $s=80$ edges under Gaussian noise. Each point denotes sample mean, and accompanying error bars indicate standard error, both computed across 5 independent draws of the dataset.}
\label{fig:synthetic-linear-50-80-erdos-renyi-gaussian}
\end{figure*}

Figures \ref{fig:synthetic-linear-25-40-erdos-renyi-gaussian} and \ref{fig:synthetic-linear-50-80-erdos-renyi-gaussian} present results on datasets generated from linear ER DAGs with $p=25$, $s=40$ and $p=50$, $s=80$ across increasing sample sizes. Since the underlying structure is identifiable only up to its MEC, all metrics are evaluated at CPDAG level. \texttt{SVI-DAG} achieves lower Brier scores and higher E-F1 score across most of the sample sizes while remaining competitive with baselines in terms of E-SHD and AUROC.

\subsection{Nonlinear Synthetic Data}

\begin{figure*}[ht]
\centering
\includegraphics[width=\linewidth]{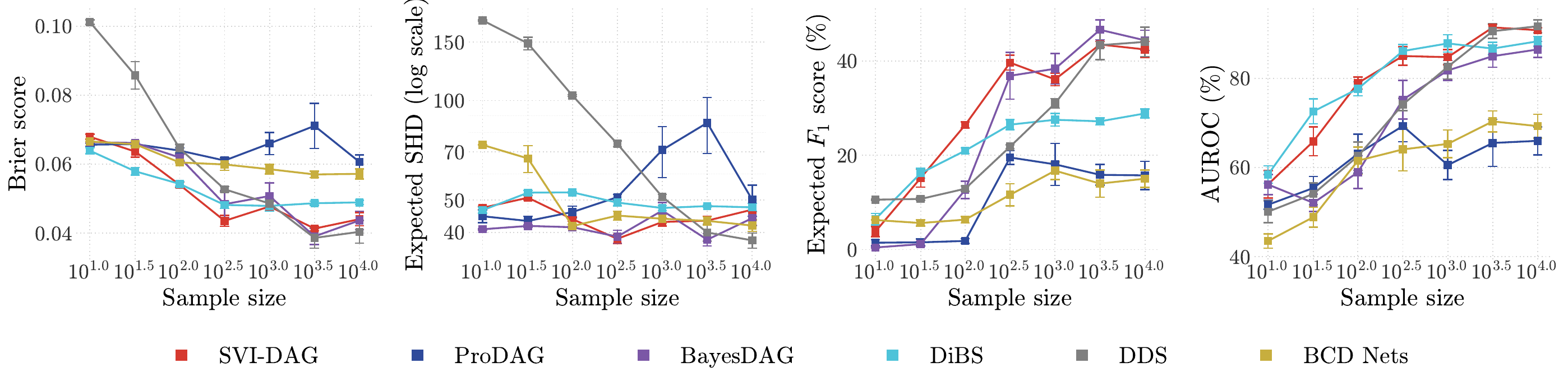}
\caption{Results on synthetic data generated from nonlinear ER DAGs with $p=25$ nodes and $s=40$ edges under Gaussian noise. Each point denotes the sample mean, and the accompanying error bars indicate the standard error, both computed across 5 independent draws of the dataset.}
\label{fig:synthetic-nonlinear-25-40-erdos-renyi-gaussian}
\end{figure*}

\begin{figure*}[ht]
\centering
\includegraphics[width=\linewidth]{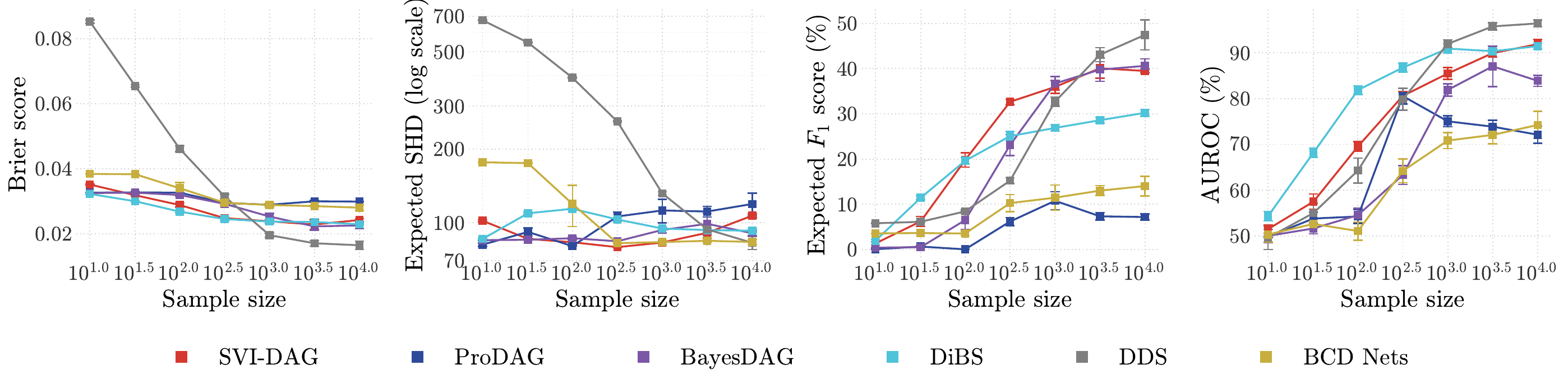}
\caption{Results on synthetic data generated from nonlinear ER DAGs with $p=50$ nodes and $s=80$ edges under Gaussian noise. Each point denotes sample mean, and accompanying error bars indicate standard error, both computed across 5 independent draws of the dataset.}
\label{fig:synthetic-nonlinear-50-80-erdos-renyi-gaussian}
\end{figure*}

Figures \ref{fig:synthetic-nonlinear-25-40-erdos-renyi-gaussian} and \ref{fig:synthetic-nonlinear-50-80-erdos-renyi-gaussian} present results on datasets generated from nonlinear ER DAGs with $p=25$, $s=40$ and $p=50$, $s=80$ across increasing sample sizes. In this setting, the underlying DAG is identifiable, and all metrics are therefore evaluated at the DAG level. We can observe that \texttt{SVI-DAG} attains a lower Brier score and higher E-F1 across most sample sizes, while also staying competitive with other algorithms in terms of the other metrics.

\subsection{Real Data}

\begin{wraptable}[10]{r}{0.35\textwidth}
\vspace{-1.2em}
\centering
\captionsetup{font=scriptsize}
\caption{\scriptsize Wall-clock time per train/evaluate on the Sachs dataset on an A100-40GB GPU using JAX.}
\label{tab:sachs_runtime}
\small
\begin{tabular}{@{}lr@{}}
\toprule
 & Time per run (s) \\
\midrule
\texttt{SVI-DAG} & 134.2 $\pm$ 30.9 \\
\texttt{ProDAG} & 100.2 $\pm$ 4.1 \\
\texttt{BayesDAG} & 266.5 $\pm$ 13.0 \\
\texttt{DDS} & 65.1 $\pm$ 1.9 \\
\texttt{BCD Nets} & 151.1 $\pm$ 2.5 \\
\texttt{DiBS} & \textbf{5.6 $\pm$ 0.7} \\
\bottomrule
\end{tabular}
\vspace{-1em}
\end{wraptable}

\begin{table*}[ht]
\centering
\caption{Performance on Sachs dataset. The average and standard errors are measured over 10-fold cross-validation splits of data. The best value of each DAG metric is indicated in bold.}
\label{tab:sachs-dag}
\small
\begin{tabularx}{\linewidth}{XXXXX}
\toprule
 & Brier score ($\downarrow$) & Exp. SHD ($\downarrow$) & Exp. F1 score ($\uparrow$) & AUROC ($\uparrow$) \\
\midrule
\texttt{SVI-DAG} & 0.185 $\pm$ 0.007 & 25.70 $\pm$ 0.92 & \textbf{28.95 $\pm$ 1.73} & 58.16 $\pm$ 2.27 \\
\texttt{ProDAG} & 0.159 $\pm$ 0.003 & \textbf{19.68 $\pm$ 0.35} & 14.32 $\pm$ 1.54 & 52.95 $\pm$ 1.81 \\
\texttt{BayesDAG} & 0.174 $\pm$ 0.008 & 28.66 $\pm$ 0.78 & 26.75 $\pm$ 2.08 & 62.23 $\pm$ 1.93 \\
\texttt{DDS} & 0.168 $\pm$ 0.004 & 24.62 $\pm$ 0.56 & 23.21 $\pm$ 1.52 & 51.35 $\pm$ 2.18 \\
\texttt{BCD Nets} & \textbf{0.153 $\pm$ 0.004} & 20.94 $\pm$ 0.50 & 10.79 $\pm$ 2.16 & 55.91 $\pm$ 2.07 \\
\texttt{DiBS} & 0.180 $\pm$ 0.004 & 40.74 $\pm$ 0.50 & 27.78 $\pm$ 0.74 & \textbf{65.63 $\pm$ 2.07} \\
\bottomrule
\end{tabularx}
\end{table*}

The flow cytometry dataset~\citep{sachs2005causal} is a standard benchmark for
causal discovery in biological systems, comprising $7466$ samples over $11$
variables and $18$ edges corresponding to phosphoproteins and phospholipids.
Tables \ref{tab:sachs-dag} and \ref{tab:sachs-cpdag} report the performance on
the Sachs dataset under DAG and CPDAG metrics, respectively. We observe that
$\texttt{SVI-DAG}$ consistently outperforms the competing algorithms in F1
score for the DAG metrics and in AUROC for the CPDAG metrics. Table
\ref{tab:sachs_runtime} further reports the computational time of the algorithms
on this dataset, where $\texttt{SVI-DAG}$ remains comparable to the other
algorithms.

We note that all results reported for \texttt{SVI-DAG} in this paper correspond
to a single choice of normalizing flow---neural spline flows. Our algorithm is
deliberately flow-agnostic, and hence any normalizing flow family can be
substituted for the spline flow without modifying the rest of the framework. The
reported results should therefore be read as the performance of one particular
instantiation of \texttt{SVI-DAG}, rather than an upper bound on what the
approach can ultimately achieve.

\begin{table*}[ht]
\centering
\caption{Performance on Sachs dataset. The average and standard errors are measured over 10-fold cross-validation splits of data. The best value of each CPDAG metric is indicated in bold.}
\label{tab:sachs-cpdag}
\small
\begin{tabularx}{\linewidth}{XXXXX}
\toprule
 & Brier score ($\downarrow$) & Exp. SHD ($\downarrow$) & Exp. F1 score ($\uparrow$) & AUROC ($\uparrow$) \\
\midrule
\texttt{SVI-DAG} & 0.237 $\pm$ 0.016 & 23.08 $\pm$ 0.87 & 54.05 $\pm$ 2.07 & \textbf{70.17 $\pm$ 2.28} \\
\texttt{ProDAG} & 0.267 $\pm$ 0.001 & \textbf{17.36 $\pm$ 0.07} & 31.55 $\pm$ 0.64 & 62.70 $\pm$ 0.90 \\
\texttt{BayesDAG} & 0.247 $\pm$ 0.010 & 26.38 $\pm$ 0.42 & 50.10 $\pm$ 1.36 & 62.88 $\pm$ 1.42 \\
\texttt{DDS} & 0.240 $\pm$ 0.005 & 23.02 $\pm$ 0.29 & 43.46 $\pm$ 0.91 & 68.91 $\pm$ 1.30 \\
\texttt{BCD Nets} & 0.259 $\pm$ 0.002 & 18.68 $\pm$ 0.16 & 26.30 $\pm$ 1.61 & 56.65 $\pm$ 1.62 \\
\texttt{DiBS} & \textbf{0.215 $\pm$ 0.003} & 35.49 $\pm$ 0.22 & \textbf{54.46 $\pm$ 0.29} & 66.80 $\pm$ 1.10 \\
\bottomrule
\end{tabularx}
\end{table*}

\subsection{Ablation Studies}

\begin{table}[ht]
\centering
\caption{Component ablation. Nonlinear ER $p=20$, $s=40$, mean $\pm$ s.e.\ over 10 seeds. MEC-cov: expected fraction of the true Markov equivalence class receiving posterior mass from the separate linear study ($p=10$, $s=10$).}
\label{tab:ablation}
\small
\begin{tabular}{lccccc}
\toprule
Variant & Brier score $\downarrow$ & Exp. SHD $\downarrow$ & Exp. F1 (\%) $\uparrow$ & AUROC (\%) $\uparrow$ & MEC-cov $\uparrow$ \\
\midrule
\texttt{SVI-DAG} & \textbf{0.077 $\pm$ 0.002} & 69.3 $\pm$ 1.1 & 26.9 $\pm$ 0.8 & \textbf{81.7 $\pm$ 1.2} & \textbf{0.06 $\pm$ 0.02} \\
(a) w/o flow & 0.078 $\pm$ 0.002 & 66.0 $\pm$ 0.8 & 25.4 $\pm$ 0.7 & 78.9 $\pm$ 1.2 & 0.00 $\pm$ 0.00 \\
(b) w/o SVGD & 0.083 $\pm$ 0.006 & \textbf{62.0 $\pm$ 2.5} & \textbf{34.8 $\pm$ 2.1} & 73.7 $\pm$ 3.2 & 0.05 $\pm$ 0.04 \\
(c) w/o domain prior & 0.208 $\pm$ 0.003 & 164.4 $\pm$ 1.5 & 18.2 $\pm$ 0.4 & 57.0 $\pm$ 2.4 & 0.00 $\pm$ 0.00 \\
(d) w/o flow \& SVGD & 0.083 $\pm$ 0.005 & 62.2 $\pm$ 2.0 & 29.1 $\pm$ 2.1 & 70.3 $\pm$ 3.7 & 0.00 $\pm$ 0.00 \\
\bottomrule
\end{tabular}
\end{table}

We assess the contribution of each component of \texttt{SVI-DAG} by ablating them. Table~\ref{tab:ablation} summarizes the results. We first observe that no ablation improves edge ranking upon the full model: AUROC is highest for the full model and degrades in every ablated variant. The prior is the dominant component in this regime. Specifically, with $p(p-1)=380$ candidate edges informed by only $300$ observations, removing the sparsity prior more than doubles the expected SHD and degrades calibration.
To ablate SVGD, we replace the particle ensemble over order potentials with a reparameterized Gaussian guide. Observe that a unimodal guide can commit to a single ordering mode. Hence we can observe relatively lower value for AUROC and higher value of Brier. The role of the flow requires more care, and we treat it in two parts. Observe
first that every posterior DAG factorizes as $A = B \odot M(r)$. Consequently, brier, AUROC, and expected SHD are functionals of the posterior edge marginals alone, and a mean-field $q(\gamma)$ can in principle match any marginal vector the conditional flow induces. The close agreement between the first two rows in the calibration column is therefore expected, and should not be read as evidence that the flow is inert. The flow's contribution appears decisively in the final column. In a companion
study on linear graphs small enough that the Markov equivalence class can be
enumerated exactly, only the flow bearing variants achieve nonzero coverage of
the true class. Removing the flow thus eliminates exact structure recovery entirely. Intuitively, the probability assigned to a specific graph is a product over hundreds of per edge events, and only a correlated posterior can concentrate mass on coherent whole
graphs. 

\section{Conclusion}
\label{sec:conclusion}

We presented \texttt{SVI-DAG}, a differentiable Bayesian approach to causal discovery that uses normalizing flows to encode dependencies between edges for expressive and multimodal learning over DAGs. 

\section{Acknowledgment}
\label{sec:acknowledgment}

The author gratefully acknowledges the computing resources provided by Purdue University, and thanks Prof. Ilias Bilionis for access to these resources and for the freedom to pursue this line of work during the author's doctoral studies.

\bibliographystyle{plainnat}
\bibliography{references}

\newpage
\appendix
\numberwithin{equation}{section}
\numberwithin{figure}{section}
\numberwithin{table}{section}

{\LARGE \textbf{Appendix}}

This appendix includes additional details for the paper, ``SVI-DAG: A Structured Variational Inference Approach to Bayesian Causal Discovery".

\section{Theory}
\label{app:theory}

\subsection{Derivation of $p(B, r)$}
\label{der:derivation_adjacency_prior}
We start from the construction that enforces acyclicity by mapping the free edge matrix $B\in\{0,1\}^{m\times m}$ and the potentials $r\in\mathbb{R}^m$ to the adjacency $A=B\odot M(r)$ where $M(r)$ is the acyclic mask induced by the order defined by $r$. 
The joint prior over $B$, the latent edge probabilities $\Pi=\{\pi_{ij}\}_{i\neq j}$, and $r$ is
\begin{equation*}
p(B,\Pi,r)
\;\propto\;
\left(\prod_{i\neq j}\operatorname{Bernoulli}\!\left(B_{ij}\mid\pi_{ij}\right)\,
\operatorname{Beta}\!\left(\pi_{ij}\mid\alpha_{ij},\beta_{ij}\right)\right)\,
\mathcal{N}\!\left(r\mid0,\sigma_r^2\Id\right),
\end{equation*}
with $B_{ii}=0$ for all $i$ and with independence between $r$ and the set of free edges in the prior. The factors are
\begin{equation*}
\operatorname{Bernoulli}\!\left(B_{ij}\mid\pi_{ij}\right)=\pi_{ij}^{B_{ij}}\left(1-\pi_{ij}\right)^{1-B_{ij}},
\qquad
\operatorname{Beta}\!\left(\pi_{ij}\mid\alpha_{ij},\beta_{ij}\right)
=\frac{\pi_{ij}^{\alpha_{ij}-1}\left(1-\pi_{ij}\right)^{\beta_{ij}-1}}{\mathcal{B}\!\left(\alpha_{ij},\beta_{ij}\right)},
\end{equation*}
where $\mathcal{B}$ denotes the Beta function. Our goal is to integrate out $\pi_{ij}$ to get the marginal prior on $(B,r)$. For a fixed ordered pair $(i,j)$ with $i\neq j$ we compute
\begin{equation*}
p(B_{ij})
=\int_{0}^{1}\operatorname{Bernoulli}\!\left(B_{ij}\mid\pi_{ij}\right)\,
\operatorname{Beta}\!\left(\pi_{ij}\mid\alpha_{ij},\beta_{ij}\right)\,d\pi_{ij}.
\end{equation*}
Substituting the explicit forms gives
\begin{equation*}
p(B_{ij})
=\frac{1}{\mathcal{B}\!\left(\alpha_{ij},\beta_{ij}\right)} 
\int_{0}^{1}\pi_{ij}^{B_{ij}+\alpha_{ij}-1}\left(1-\pi_{ij}\right)^{\left(1-B_{ij}\right)+\beta_{ij}-1}\,d\pi_{ij}.
\end{equation*}
The integral equals a Beta function with updated parameters, which gives the standard Beta Bernoulli marginal
\begin{equation*}
p(B_{ij})=\frac{\mathcal{B}\!\left(B_{ij}+\alpha_{ij},\,\beta_{ij}+1-B_{ij}\right)}{\mathcal{B}\!\left(\alpha_{ij},\beta_{ij}\right)}.
\end{equation*}
Using independence across ordered pairs in the prior and independence of $r$ from $B$ in the prior, we get the full marginal as
\begin{equation*}
\label{eq:p_B_r}
p(B,r)
=\left(\prod_{i\neq j}\frac{\mathcal{B}\!\left(B_{ij}+\alpha_{ij},\,\beta_{ij}+1-B_{ij}\right)}{\mathcal{B}\!\left(\alpha_{ij},\beta_{ij}\right)}\right)\,
\mathcal{N}\!\left(r\mid 0,\sigma_r^2\Id\right).
\end{equation*}
The diagonal constraint $B_{ii}=0$ excludes self loops and is implemented by masking the diagonal of $B$. 
Note that instead of $p(B_{ij}=1)=p_{ij}$ and $p(B_{ij}=0)=1 - p_{ij}$ we have
$$
p(B_{ij}=1)=\frac{\alpha_{ij}}{\alpha_{ij}+\beta_{ij}},
\qquad
p(B_{ij}=0)=\frac{\beta_{ij}}{\alpha_{ij}+\beta_{ij}},
$$
for a chosen finite $\nu_{ij}$. This completes the derivation.

\subsection{Derivation of ELBO without the normalizing flow}
\label{prop:derivation_elbo}
Given \(A=B\odot M(r)\), which is a deterministic function of \((B,r)\) alone, we have the full joint model as $p(B,\Pi,r,\theta,\mathcal{D}) = p(B,\Pi,r)\;p(\theta\mid A)\;p(\mathcal{D}\mid A,\theta)$.
The posterior is
\[
p(B,\Pi,r,\theta\mid \mathcal{D}) 
= \frac{p(B,\Pi,r,\theta,\mathcal{D})}{\int p(B,\Pi,r,\theta,\mathcal{D})\, dB\,d\Pi\,dr\,d\theta}.
\]
\noindent
Since this posterior is intractable, we approximate it with a variational distribution
\[
q_\phi(B,\Pi, r, \theta) = q_\phi(B,\Pi, r)\;q_\phi(\theta \mid A), \qquad A=B\odot M(r).
\]
\noindent 
Here \(\phi\) denotes the collection of variational parameters for \(B\), \(\Pi\), \(r\), and \(\theta\). We begin by writing the evidence as
$
\log p(\mathcal{D}) = \log \int p(B,\Pi,r,\theta,\mathcal{D})\,dB\,d\Pi\,dr\,d\theta.
$
By definition, the ELBO is
$$
\text{ELBO}(\phi)=\mathbb{E}_{q_\phi(B, \Pi, r, \theta)}\left[\log p(B, \Pi, r, \theta, \mathcal{D})-\log q_\phi(B, \Pi, r, \theta)\right].
$$
Using $p(B,\Pi,r,\theta,\mathcal{D}) = p(\mathcal{D} \mid A,\theta)\;p(\theta\mid A)\;p(B,\Pi,r)$, we have
\begin{equation*}
\text{ELBO}(\phi)=\mathbb{E}_{q_\phi(B, \Pi, r, \theta)}\!\left[\log p(\mathcal{D} \mid A, \theta)+\log p(\theta \mid A)+\log p(B, \Pi, r)-\log q_\phi(B, \Pi, r, \theta)\right].
\end{equation*}
If we assume independence between structural variables and parameter variables in the guide, we can factorize:
\begin{equation*}
q_\phi(B, \Pi, r, \theta)=q_\phi(B, \Pi, r)\, q_\phi(\theta \mid A),
\end{equation*}
so that
\begin{equation*}
\log q_\phi(B, \Pi, r, \theta)=\log q_\phi(B, \Pi, r)+\log q_\phi(\theta \mid A).
\end{equation*}
By linearity of expectation and the definition of the KL divergence ($\KL(q\|p)=\mathbb{E}_q[\log q-\log p]$), the ELBO can then be rearranged as:
\begin{equation*}
\begin{aligned}
\label{eq:elbo_simple}
\text{ELBO}(\phi)
&=\mathbb{E}_{q_\phi(B, \Pi, r)}\!\Big[
      \mathbb{E}_{q_\phi(\theta \mid A)}\!\left[
         \log p(\mathcal{D} \mid A, \theta)
         + \log p(\theta \mid A)
      \right] \\
&\qquad\qquad\qquad
      -\,\mathbb{E}_{q_\phi(\theta \mid A)}\!\left[
         \log q_\phi(\theta \mid A)
      \right]
      + \log p(B, \Pi, r)
      - \log q_\phi(B, \Pi, r)
   \Big] \\[6pt]
&=\mathbb{E}_{q_\phi(B, \Pi, r)}\!\Big[
      \mathbb{E}_{q_\phi(\theta \mid A)}
      [\log p(\mathcal{D} \mid A, \theta)]
   \Big] \\
&\qquad
   -\,\mathbb{E}_{q_\phi(B, \Pi, r)}\!\Big[
      \KL\!\left(q_\phi(\theta \mid A) \,\|\, p(\theta \mid A)\right)
   \Big]
   - \KL\!\left(q_\phi(B, \Pi, r) \,\|\, p(B, \Pi, r)\right).
\end{aligned}
\end{equation*}
If we analytically marginalize $\Pi$ in the prior to get $p(B, r)$ (see Derivation~\ref{der:derivation_adjacency_prior}) we have
\begin{equation*}
\label{eq:prior_B_r_without_normalizing_flow}
p(B, r)=\left(\prod_{i \neq j} \frac{\mathcal{B}\left(B_{i j}+\alpha_{i j}, \beta_{i j}+1-B_{i j}\right)}{\mathcal{B}\left(\alpha_{i j}, \beta_{i j}\right)}\right) \,\mathcal{N}(r\mid0,\sigma_r^2\Id),
\end{equation*}
\noindent 
where $\mathcal{B}$ represents the Beta function.
Since $\Pi$ is an auxiliary latent variable, marginalizing it out does not change the evidence $p(\mathcal{D})$. We therefore perform variational inference directly on the collapsed model
$$
p(\mathcal{D},B,r,\theta)=p(\mathcal{D}\mid A,\theta)\,p(\theta\mid A)\,p(B,r),
\qquad
A=B\odot M(r),
$$
and the corresponding guide $q_\phi(B,r)\,q_\phi(\theta\mid A)$. The ELBO of this collapsed model is
\begin{equation}
\label{eq:ELBO-br}
\begin{aligned}
\text{ELBO}(\phi)
&= \mathbb{E}_{q_\phi(B, r)} \Big[
      \underbrace{\mathbb{E}_{q_\phi(\theta \mid A)}
      [\log p(\mathcal{D} \mid A, \theta)]}_{\text{expected log likelihood}}
      - \underbrace{D_{\mathrm{KL}}\!\left(q_\phi(\theta \mid A) \| p(\theta \mid A)\right)}_{\text{KL divergence for }\theta}
   \Big] \\
&\quad
   - \underbrace{D_{\mathrm{KL}}\!\left(q_\phi(B, r) \| p(B, r)\right)}_{\text{KL divergence for }B\text{ and }r}.
\end{aligned}
\end{equation}

\subsection{Derivation of ELBO with Normalizing Flow}
\label{prop:derivation_elbo_flow}

We derive the ELBO when the edge logits $\gamma$ are generated by a conditional normalizing flow $T_\phi$ and acyclicity is enforced by construction. The construction uses a free edge matrix $B$ with zero diagonal and a vector of node potentials $r\in\mathbb{R}^m$. The adjacency matrix used inside the model is given by
\begin{equation*}
\label{eq:A-hard-flow}
A=B\odot M(r),
\qquad
M(r)=P(r)\,L\,P(r)^T,
\end{equation*}
where $P(r)$, $L$, and $M(r)$ are as defined in the main text.
Note that a factorization where $q_\phi(A)$ assumes independence between $r$ and $\gamma$ prevents the model from capturing multiple modes in the DAG posterior. For example, if two DAGs fit the data equally well but imply different topological orders, an independent guide $q_\phi(r)\,q_\phi(\gamma)$ forces the model to average edge probabilities across incompatible orders, which can give a dense graph that does not coincide with any high posterior DAG. So to represent multiple modes, we model dependence between $r$ and $\gamma$.
Let \(z\sim p_0(z)\) and \(\gamma=T_\phi(z;r)\), where for each \(r\), \(T_\phi(\cdot;r)\) is a \(C^1\) diffeomorphism. Assume \(0<p(\mathcal D)<\infty\), \(q_\psi(r)\ll p(r)\), \(q_\phi(\gamma\mid r)\ll p(\gamma)\), \(q_\phi(\theta\mid A)\ll p(\theta\mid A)\), and integrability under \(q_{\phi,\psi}\) of all displayed log-density terms, including the log likelihood. The induced conditional guide over \(\gamma\) is given by the change of variables
\begin{equation*}
\label{eq:q-gamma-flow}  
q_\phi(\gamma \mid r)
= 
p_0 \bigl(T_\phi^{-1}(\gamma; r)\bigr)
\,
\left|\det \frac{\partial T_\phi^{-1}(\gamma; r)}{\partial \gamma}\right|
\;=\;
p_0(z)\,
\left|\det \frac{\partial T_\phi(z; r)}{\partial z}\right|^{-1}.
\end{equation*}
\noindent The hard DAG adjacency is a deterministic function of \(B\) and \(r\). The joint model is
\begin{equation}
\label{eq:model-joint}
p(\mathcal{D},B,r,\theta,\gamma)
=
p(\mathcal{D}\mid A,\theta)\,
p(\theta\mid A)\,
p(B\mid \gamma)\, p(r)\,
p(\gamma),
\end{equation}
where \(p(B\mid\gamma)\) is the Bernoulli conditional and \(p(r)\) is the potential prior.  Note that while the guide $q_\phi$ conditions $\gamma$ on $r$, the prior $p(r)p(\gamma)$ typically factorizes unless specific domain knowledge suggests otherwise.
We have the joint guide as
\begin{equation}
\label{eq:guide-factorization}
q_{\phi,\psi}(\gamma,B,r,\theta)
=
q_\phi(\gamma \mid r)\,
p(B\mid \gamma)\,
q_\psi(r)\,
q_\phi(\theta\mid A),
\end{equation}
to avoid the collapse to a single mode over DAGs. By definition,
\begin{equation*}
\label{eq:ELBO-def}
\mathrm{ELBO}(\phi,\psi)
=
\mathbb E_{q_{\phi,\psi}(\gamma,B,r,\theta)}
\Bigl[
\log p(\mathcal{D},B,r,\theta,\gamma)
-
\log q_{\phi,\psi}(\gamma,B,r,\theta)
\Bigr].
\end{equation*}
Using Eq. \ref{eq:model-joint} and Eq. \ref{eq:guide-factorization} we expand
\begin{align}
\mathrm{ELBO}(\phi,\psi)
&=
\mathbb{E}_{q_\psi(r)\,q_\phi(\gamma\mid r)\,p(B\mid \gamma)\,q_\phi(\theta\mid A)}
\Bigl[
\log p(\mathcal{D}\mid A,\theta)
+\log p(\theta\mid A)
+\log p(B\mid \gamma) \nonumber\\
&\qquad\qquad\qquad\qquad
+ \log p(r)
+\log p(\gamma) -\log q_\phi(\theta\mid A)
-\log p(B\mid \gamma) \nonumber\\
&\qquad\qquad\qquad\qquad
-\log q_\psi(r)
-\log q_\phi(\gamma\mid r)
\Bigr].
\label{eq:ELBO-expanded}
\end{align}
We first integrate over $\theta$ conditionally on the hard adjacency $A$.  The terms that depend on $\theta$ are $\log p(\mathcal{D}\mid A,\theta)+\log p(\theta\mid A)-\log q_\phi(\theta\mid A)$. Therefore
\begin{align}
\mathrm{ELBO}(\phi,\psi)
&=
\mathbb{E}_{q_\psi(r)\,q_\phi(\gamma\mid r)\,p(B\mid \gamma)}
\Bigl[
\mathbb{E}_{q_\phi(\theta\mid A)}
\bigl[\log p(\mathcal{D}\mid A,\theta)\bigr]
-
\KL\!\bigl(q_\phi(\theta\mid A)\,\|\,p(\theta\mid A)\bigr) \nonumber\\
&\qquad\qquad\qquad\qquad
+\ \log p(B\mid \gamma) - \log p(B\mid \gamma)
+\ \log p(r) - \log q_\psi(r)
\Bigr] \nonumber\\
&\qquad\qquad\qquad\qquad
- \mathbb{E}_{q_\psi(r)}
\bigl[\KL\!\bigl(q_\phi(\gamma\mid r)\,\|\,p(\gamma)\bigr)\bigr].
\label{eq:ELBO-grouped-pre}
\end{align}
The model and guide share the same hard Bernoulli conditional \(p(B\mid\gamma)\), so its two log-density terms cancel exactly. Separating the remaining terms for \(r\) gives
\begin{align}
\mathrm{ELBO}(\phi,\psi)
&=
\mathbb{E}_{q_\psi(r)\,q_\phi(\gamma\mid r)\,p(B\mid \gamma)}
\Bigl[
\mathbb{E}_{q_\phi(\theta\mid A)}
\bigl[\log p(\mathcal{D}\mid A,\theta)\bigr]
-
\KL\!\bigl(q_\phi(\theta\mid A)\,\|\,p(\theta\mid A)\bigr)
\Bigr] \nonumber\\
&\qquad
-\ \KL\!\bigl(q_\psi(r)\,\|\,p(r)\bigr)
\ -\ \mathbb{E}_{q_\psi(r)}\bigl[\KL\!\bigl(q_\phi(\gamma\mid r)\,\|\,p(\gamma)\bigr)\bigr].
\label{eq:ELBO-grouped}
\end{align}
The domain informed prior is now incorporated in $p(\gamma)$ (see Derivation \ref{der:logistic-beta-prior}) as
\begin{equation*}
\label{eq:logistic-beta-prior-derivation}
p(\gamma)
=
\prod_{i\neq j}
\frac{1}{\mathcal{B}(\alpha_{ij},\beta_{ij})}\,
\sigmoid(\gamma_{ij})^{\alpha_{ij}}\,
\bigl(1-\sigmoid(\gamma_{ij})\bigr)^{\beta_{ij}},
\qquad
\sigmoid(x)=\frac{1}{1+e^{-x}},
\end{equation*}
Using Eq. \ref{eq:q-gamma-flow}, we reparameterize the expectation over $q_\phi(\gamma\mid r)$ by $z\sim p_0(z)$ and $\gamma=T_\phi(z; r)$. The conditional KL divergence for the edge logits becomes
\begin{align*}
\label{eq:KL-gamma-flow}
\KL\!\bigl(q_\phi(\gamma\mid r)\,\|\,p(\gamma)\bigr)
&=
\mathbb{E}_{z\sim p_0}\Bigl[
\log q_\phi\bigl(T_\phi(z; r)\mid r\bigr)-\log p\bigl(T_\phi(z; r)\bigr)
\Bigr]\\
&=
\mathbb{E}_{z\sim p_0}\Bigl[
\log p_0(z)-\log\!\Bigl|\det \frac{\partial T_\phi(z; r)}{\partial z}\Bigr|
-\log p\bigl(T_\phi(z; r)\bigr)
\Bigr].
\end{align*}
Substituting above equation into Eq. \ref{eq:ELBO-grouped} and changing variables in the inner expectations we get the reparameterized bound
\begin{equation}
\label{eq:ELBO-flow-hard}
\scalebox{0.85}{$
\begin{aligned}
\mathrm{ELBO}(\phi,\psi)
&=
\mathbb{E}_{r\sim q_\psi(r)}\;
\mathbb{E}_{z\sim p_0}\;
\mathbb{E}_{B\sim p(\cdot\mid T_\phi(z; r))}
\Bigl[
\underbrace{
\mathbb{E}_{q_\phi(\theta\mid A)}
\bigl[\log p(\mathcal{D}\mid A,\theta)\bigr]
}_{\text{expected log likelihood}}
-
\underbrace{
\KL\!\bigl(q_\phi(\theta\mid A)\,\|\,p(\theta\mid A)\bigr)
}_{\text{KL divergence for }\theta} \\
&\quad
+\ 
\log\!\Bigl|\det \frac{\partial T_\phi(z; r)}{\partial z}\Bigr|
-\log p_0(z)
+\log p\bigl(T_\phi(z; r)\bigr)
\Bigr]
\ -\ \underbrace{\KL\!\bigl(q_\psi(r)\,\|\,p(r)\bigr)}_{\text{KL divergence for potentials}}.
\end{aligned}
$}
\end{equation}

This is our revised ELBO. 

\subsection{Logistic--Beta Prior on Edge Logits induced by the Domain Informed Prior}
\label{der:logistic-beta-prior}

For each ordered pair $(i,j)$ with $i\neq j$, let the edge probability $\pi_{ij}\in(0,1)$ carry a domain informed Beta prior
\begin{equation*}
\label{eq:beta-edge-prior}
\pi_{ij}\sim \mathrm{Beta}(\alpha_{ij},\beta_{ij}),
\qquad
p(\pi_{ij})=\frac{1}{\mathcal{B}(\alpha_{ij},\beta_{ij})}\,\pi_{ij}^{\alpha_{ij}-1}(1-\pi_{ij})^{\beta_{ij}-1},
\end{equation*}
with hyperparameters constructed from our beliefs $p_{ij}\in(0,1)$ and concentration $\nu_{ij}>0$ via
\begin{equation*}
\label{eq:domain-hyperparams}
\alpha_{ij}=\nu_{ij} p_{ij}+1,
\qquad
\beta_{ij}=\nu_{ij} (1-p_{ij})+1.
\end{equation*}
In particular $\alpha_{ij}>0$ and $\beta_{ij}>0$ (indeed $\alpha_{ij},\beta_{ij}\ge 1$), so the Beta density is proper.
\noindent Let's define the edge logit
\begin{equation*}
\label{eq:logit-map}
\gamma_{ij}=\operatorname{logit}(\pi_{ij})=\log\frac{\pi_{ij}}{1-\pi_{ij}}
\quad\Longleftrightarrow\quad
\pi_{ij}=\sigmoid(\gamma_{ij})=\frac{1}{1+\mathrm{e}^{-\gamma_{ij}}}.
\end{equation*}
The map $\gamma_{ij}\mapsto\sigmoid(\gamma_{ij})$ is a $C^1$ diffeomorphism from $\mathbb{R}$ onto $(0,1)$, with strictly positive derivative, so the change of variables formula for densities applies. The Jacobian factor is
\begin{equation}
\label{eq:logit-jac}
\left|\frac{d\pi_{ij}}{d\gamma_{ij}}\right|
=\sigmoid(\gamma_{ij})\bigl(1-\sigmoid(\gamma_{ij})\bigr),
\end{equation}
which is positive, so the absolute value may be dropped.
By the change of variables formula, the prior density of edge logit $\gamma_{ij}$ induced by the Beta prior on $\pi_{ij}$ is
\begin{align*}
\label{eq:logistic-beta-single}
p(\gamma_{ij})
&= p(\pi_{ij})\bigg|_{\pi_{ij}=\sigmoid(\gamma_{ij})}
\left|\frac{d\pi_{ij}}{d\gamma_{ij}}\right| \nonumber\\
&=\frac{1}{\mathcal{B}(\alpha_{ij},\beta_{ij})}\;
\underbrace{\sigmoid(\gamma_{ij})^{\alpha_{ij}-1}}_{\text{from }p(\pi_{ij})}\;
\underbrace{\bigl(1-\sigmoid(\gamma_{ij})\bigr)^{\beta_{ij}-1}}_{\text{from }p(\pi_{ij})}\;
\underbrace{\sigmoid(\gamma_{ij})\bigl(1-\sigmoid(\gamma_{ij})\bigr)}_{\text{Jacobian in \eqref{eq:logit-jac}}}
\nonumber\\
&=\frac{1}{\mathcal{B}(\alpha_{ij},\beta_{ij})}\;
\sigmoid(\gamma_{ij})^{\alpha_{ij}}\,
\bigl(1-\sigmoid(\gamma_{ij})\bigr)^{\beta_{ij}}.
\end{align*}
This is the Logistic-Beta density on edge logits. It is a proper density since $\gamma_{ij}\mapsto\sigmoid(\gamma_{ij})$ is a $C^1$ diffeomorphism of $\mathbb{R}$ onto $(0,1)$ with Jacobian \eqref{eq:logit-jac}, the substitution $\pi_{ij}=\sigmoid(\gamma_{ij})$ gives
\begin{equation*}
\int_{\mathbb{R}}p(\gamma_{ij})\,d\gamma_{ij}
=\int_0^1 p(\pi_{ij})\,d\pi_{ij}=1.
\end{equation*}
Assuming independence across ordered pairs in the prior,
\begin{equation*}
\label{eq:logistic-beta-product}
p(\gamma)
=\prod_{i\neq j} p(\gamma_{ij})
=\prod_{i\neq j}
\frac{1}{\mathcal{B}(\alpha_{ij},\beta_{ij})}\;
\sigmoid(\gamma_{ij})^{\alpha_{ij}}\bigl(1-\sigmoid(\gamma_{ij})\bigr)^{\beta_{ij}}.
\end{equation*}

\subsection{Closed Form KL for Gaussian $p(\theta\mid A)$}
\label{der:KL-gaussian-diag}
Let $d_\theta$ be the dimension of $\theta$. Suppose
\[
q_\phi(\theta\mid A) \;=\; \mathcal{N}\!\bigl(\mu_q(A),\,\Sigma_q(A)\bigr),
\qquad
p(\theta\mid A) \;=\; \mathcal{N}\!\bigl(\mu_p(A),\,\Sigma_p(A)\bigr),
\]
where $\Sigma_q(A)=\mathrm{diag}\!\bigl(\sigma_\phi^2(A)\bigr)$ and $\Sigma_p(A)=\mathrm{diag}\!\bigl(\sigma_p^2(A)\bigr)$ are diagonal with strictly positive entries $\sigma_{\phi,k}(A)>0$ and $\sigma_{p,k}(A)>0$ for all $k$ (so that $\Sigma_q,\Sigma_p$ are positive definite and invertible), and $\mu_q(A)=\mu_\phi(A)$. These conditions hold in our model, since $\sigma_\phi=\mathrm{softplus}(\rho_\phi)>0$ and $\sigma_p$ is a fixed positive prior scale. We derive a closed form for
\[
\KL\!\bigl(q_\phi(\theta\mid A)\,\|\,p(\theta\mid A)\bigr)
\;=\;
\E_{q_\phi(\theta\mid A)}\bigl[\log q_\phi(\theta\mid A)-\log p(\theta\mid A)\bigr].
\]

\noindent For any mean $\mu$ and positive definite covariance $\Sigma$,
\[
\log \mathcal{N}(\theta;\mu,\Sigma)
= -\frac{1}{2}\Bigl(
d_\theta\log(2\pi) + \log|\Sigma| + (\theta-\mu)^\top \Sigma^{-1} (\theta-\mu)
\Bigr).
\]
Therefore,
\begin{align*}
\log q_\phi(\theta\mid A)
&= -\frac{1}{2}\Bigl(
d_\theta\log(2\pi) + \log|\Sigma_q| + (\theta-\mu_q)^\top \Sigma_q^{-1} (\theta-\mu_q)
\Bigr),\\
\log p(\theta\mid A)
&= -\frac{1}{2}\Bigl(
d_\theta\log(2\pi) + \log|\Sigma_p| + (\theta-\mu_p)^\top \Sigma_p^{-1} (\theta-\mu_p)
\Bigr),
\end{align*}
where we suppress the explicit dependence on $A$ inside $\mu_q,\mu_p,\Sigma_q,\Sigma_p$ for readability.
Now taking the expectation under $q_\phi(\theta\mid A)$, the $d_\theta\log(2\pi)$ terms cancel and we have
\begin{align}
\KL(q\|p)
&=
\E_q[\log q - \log p] \nonumber\\
&= \frac{1}{2}\Bigl(
\log|\Sigma_p| - \log|\Sigma_q|
\Bigr)
+\frac{1}{2}\E_q\!\bigl[(\theta-\mu_p)^\top \Sigma_p^{-1} (\theta-\mu_p)\bigr]\nonumber \\
&-\frac{1}{2}\E_q\!\bigl[(\theta-\mu_q)^\top \Sigma_q^{-1} (\theta-\mu_q)\bigr].
\label{eq:KL-skeleton}
\end{align}

\noindent Let $\delta = \mu_q - \mu_p$. Since $(\theta-\mu_p)=(\theta-\mu_q)+\delta$ and $\Sigma_p^{-1}$ is symmetric, we have
\begin{align*}
(\theta-\mu_p)^\top \Sigma_p^{-1} (\theta-\mu_p)
&= (\theta-\mu_q)^\top \Sigma_p^{-1} (\theta-\mu_q)
+ 2\,\delta^\top \Sigma_p^{-1} (\theta-\mu_q)
+ \delta^\top \Sigma_p^{-1} \delta.
\end{align*}
Taking $\E_q$ and using the Gaussian moments $\E_q[\theta-\mu_q]=0$ and $\E_q\bigl[(\theta-\mu_q)(\theta-\mu_q)^\top\bigr]=\Sigma_q$ (so that $\E_q[(\theta-\mu_q)^\top M (\theta-\mu_q)] = \mathrm{tr}(M\Sigma_q)$ for any fixed $M$) gives the standard identity:
\begin{equation}
\label{eq:quad-expansion}
\E_q\!\bigl[(\theta-\mu_p)^\top \Sigma_p^{-1} (\theta-\mu_p)\bigr]
= \mathrm{tr}\!\bigl(\Sigma_p^{-1}\Sigma_q\bigr) + \delta^\top \Sigma_p^{-1}\delta.
\end{equation}
Similarly,
\begin{equation}
\label{eq:quad-self}
\E_q\!\bigl[(\theta-\mu_q)^\top \Sigma_q^{-1} (\theta-\mu_q)\bigr]
= \mathrm{tr}\!\bigl(\Sigma_q^{-1}\Sigma_q\bigr)
= d_\theta.
\end{equation}

\noindent Now plugging Eq. \ref{eq:quad-expansion} and Eq. \ref{eq:quad-self} into Eq. \ref{eq:KL-skeleton} gives
\begin{equation}
\label{eq:KL-general-appendix}
\KL(q\|p)
= \frac{1}{2}\Bigl(
\log|\Sigma_p| - \log|\Sigma_q|
- d_\theta
+ \mathrm{tr}\!\bigl(\Sigma_p^{-1}\Sigma_q\bigr)
+ (\mu_q-\mu_p)^\top \Sigma_p^{-1} (\mu_q-\mu_p)
\Bigr).
\end{equation}
Since $\Sigma_q=\mathrm{diag}(\sigma_\phi^2)$ and $\Sigma_p=\mathrm{diag}(\sigma_p^2)$ are diagonal with strictly positive entries, we have $\Sigma_p^{-1}=\mathrm{diag}(1/\sigma_p^2)$ and
\[
\log|\Sigma_p| - \log|\Sigma_q|
= \sum_{k=1}^{d_\theta}\log \sigma_{p,k}^2 - \sum_{k=1}^{d_\theta}\log \sigma_{\phi,k}^2
= 2\sum_{k=1}^{d_\theta}\log\frac{\sigma_{p,k}}{\sigma_{\phi,k}},
\]
where the last equality uses $\log\sigma^2 = 2\log\sigma$ (valid since $\sigma_{p,k},\sigma_{\phi,k}>0$), together with
\[
\mathrm{tr}\!\bigl(\Sigma_p^{-1}\Sigma_q\bigr)
= \sum_{k=1}^{d_\theta}\frac{\sigma_{\phi,k}^2}{\sigma_{p,k}^2},
\qquad
(\mu_q-\mu_p)^\top \Sigma_p^{-1} (\mu_q-\mu_p)
= \sum_{k=1}^{d_\theta}\frac{(\mu_{\phi,k}-\mu_{p,k})^2}{\sigma_{p,k}^2}.
\]

\noindent Substituting the diagonal expressions into Eq. \ref{eq:KL-general-appendix} and restoring the explicit $A$ dependence we get,
\begin{equation*}
\KL\!\bigl(q_\phi(\theta\mid A)\,\|\,p(\theta\mid A)\bigr)
=\frac{1}{2}\sum_{k=1}^{d_\theta}
\left[
\frac{\sigma_{\phi,k}^2(A)}{\sigma_{p,k}^2(A)}
+\frac{\bigl(\mu_{\phi,k}(A)-\mu_{p,k}(A)\bigr)^2}{\sigma_{p,k}^2(A)}
-1 + 2\log\frac{\sigma_{p,k}(A)}{\sigma_{\phi,k}(A)}
\right].
\end{equation*}
This is the desired closed form.

\subsection{Pseudo Code and Computational Complexity}
\label{app:pseudo_code}

\begin{algorithm}[H]
\caption{\texttt{SVI-DAG} Algorithm}
\label{alg:svidag}
\begin{algorithmic}[1]
\Require Dataset \(\mathcal{D}\); priors \((\alpha_{ij},\beta_{ij})\); flow \(T_\phi\); hypernetwork \(H_\phi\); base \(p_0(z)\); iterations \(T_{\mathrm{itr}}\); Sinkhorn iterations \(K_S\); objective draws \(S_{\mathrm{MC}}\); temperatures \(T>0\), \(\tau>0\); particles \(K_\text{part}\); step sizes \(\eta_{\phi}, \eta_{r}\)
\Ensure Variational parameters \(\phi\) and posterior particles \(\{r^{(k)}\}_{k=1}^{K_\text{part}}\)
\State \textbf{Initialize} parameters \(\phi\) and particles \(\{r^{(k)}\}_{k=1}^{K_\text{part}} \sim  N(0,\sigma_r^2 I_m)\) 
\For{\(t=1,\dots,T_{\mathrm{itr}}\)}
    \State \textbf{Compute particle wise objectives and gradients}
    \For{\(k=1,\dots,K_\text{part}\)}
        \State Compute \(P_{\tau,K_S}(r^{(k)})\gets \mathcal S_{K_S}\!\big(\exp(S_0(r^{(k)})/\tau)\big)\). 
        \State Compute \(M(r^{(k)})\gets P(r^{(k)})LP(r^{(k)})^T\).
        \State Compute \(M_{\tau,K_S}(r^{(k)})\gets P_{\tau,K_S}(r^{(k)})\,L\,P_{\tau,K_S}(r^{(k)})^{T}\).
        \For{\(s=1,\dots,S_{\mathrm{MC}}\)}
            \State Sample \(z^{(k,s)}\sim p_0\) and set \(\gamma^{(k,s)}\gets T_\phi(z^{(k,s)};r^{(k)})\).
            \State Sample \(B^{(k,s)}\gets\mathbf 1\{\gamma^{(k,s)}+R^{(k,s)}>0\}\). 
            \Comment{Eq.~\ref{eq:binary-concrete-sample}}
            \State Sample \(\tilde B^{(k,s)}\gets\sigmoid((\gamma^{(k,s)}+R^{(k,s)})/T)\).
            \Comment{Eq.~\ref{eq:binary-concrete-sample}}
            \State Set \(A^{(k,s)}\gets B^{(k,s)}\odot M(r^{(k)})\odot(\Ones-\Id)\).
            \State Set \(\tilde A^{(k,s)}\gets\tilde B^{(k,s)}\odot M_{\tau,K_S}(r^{(k)})\odot(\Ones-\Id)\).
            \State \(A_{\mathrm{ST}}^{(k,s)}\gets\tilde A^{(k,s)}+\operatorname{sg}(A^{(k,s)}-\tilde A^{(k,s)})\).  \Comment{Eq.~\ref{eq:hard-soft-st-adjacency}}
            \State \((\mu_\phi^{(k,s)},\rho_\phi^{(k,s)})\gets H_\phi(A_{\mathrm{ST}}^{(k,s)})\) and set \(\sigma_\phi^{(k,s)}\gets\operatorname{softplus}(\rho_\phi^{(k,s)})\).
            \State Sample \(\epsilon^{(k,s)}\sim\mathcal N(0,I_{d_\theta})\) and set \(\theta^{(k,s)}\gets\mu_\phi^{(k,s)}+\sigma_\phi^{(k,s)}\odot\epsilon^{(k,s)}\).
            \State Draw a minibatch \(\mathcal D_b^{(k,s)}\subset\{1,\dots,N\}\) and compute
            \[
            \widehat{\mathcal L}_{\mathrm{ELL}}^{(k,s)}
            \gets\frac{N}{|\mathcal D_b^{(k,s)}|}\sum_{i=1}^{m}\sum_{n\in\mathcal D_b^{(k,s)}}\log p_i\!\bigl(x_i^{(n)}\mid g_i(x^{(n)};\theta_i^{(k,s)})\bigr).
            \]
            \State Compute \(\mathcal L_{\mathrm{KL}}^{(k,s)}(\phi;A_{\mathrm{ST}}^{(k,s)})\) \Comment{Eq.~\ref{eq:KL_closed_form}}
            \State \(\mathcal L_{\mathrm{flow}}^{(k,s)}\gets\log\!\left|\det\frac{\partial\gamma^{(k,s)}}{\partial z^{(k,s)}}\right|-\log p_0(z^{(k,s)})+\log p(\gamma^{(k,s)})\).
            \State \(\widehat{\mathcal L}^{(k,s)}\gets\widehat{\mathcal L}_{\mathrm{ELL}}^{(k,s)}-\mathcal L_{\mathrm{KL}}^{(k,s)}+\mathcal L_{\mathrm{flow}}^{(k,s)}\).   
        \EndFor
        \State \(\widehat{\mathcal L}^{(k)}\gets S_{\mathrm{MC}}^{-1}\sum_{s=1}^{S_{\mathrm{MC}}}\widehat{\mathcal L}^{(k,s)}\).  \Comment{Eq.~\ref{eq:conditional-elbo}}
        \State Target score: \(g^{(k)} \gets \nabla_{r^{(k)}}^{\mathrm{ST}} \bigl(\widehat{\mathcal L}^{(k)} + \log p(r^{(k)})\bigr)\).
        \State \(\boldsymbol\psi^{(k)}\gets\operatorname{vec}(M_{\tau,K_S}^{(k)})\).
    \EndFor
    \State \textbf{Update particles via SVGD in relaxed acyclicity space}
    \For{\(k=1,\dots,K_\text{part}\)}
        \State \(\Phi^{(k)} \gets K_\text{part}^{-1}\sum_{j=1}^{K_\text{part}}\left[k_M(\psi^{(j)},\psi^{(k)})g^{(j)}+\nabla_{r^{(j)}}k_M(\psi^{(j)},\psi^{(k)})\right]\).
        \State $r^{(k)} \gets r^{(k)} + \eta_r \Phi^{(k)}$. \Comment{Eq.~\ref{eq:svgd_update}}
    \EndFor
    \State \textbf{Update \(\phi\) with particles held fixed}
    \State \(\widehat{\mathcal J}_{\mathrm{block}}\gets K_\text{part}^{-1}\sum_k\widehat{\mathcal L}^{(k)}\). 
    \State \(\phi\gets\phi+\eta_{\phi}\nabla_\phi^{\mathrm{ST}}\widehat{\mathcal J}_{\mathrm{block}}\).
\EndFor
\State \Return \(\phi\), \(\{r^{(k)}\}_{k=1}^{K_\text{part}}\).
\end{algorithmic}
\end{algorithm}

We include the pseudo code for \texttt{SVI-DAG} in Algorithm~\ref{alg:svidag}. We choose neural spline flows~\citep{durkan2019neural} for \(T_\phi\) as they provide expressive transformations while allowing tractable forward evaluation and log Jacobian computation. The most computationally intensive components are the formation of the relaxed acyclicity mask and particle interactions in the SVGD style update.
For each particle, applying \(K_S\) Sinkhorn normalizations to an \(m\times m\) matrix costs \(\mathcal{O}(K_Sm^2)\), giving \(\mathcal{O}(K_{\text{part}}K_Sm^2)\) work per training iteration.
This is followed by the dense multiplication $P_{\tau,K_S} L P_{\tau,K_S}^T$ that forms the generally cyclic relaxed acyclicity mask, imposing a worst case cost of $\mathcal{O}(K_{\text{part}}m^3)$. Forming the hard permutation mask can instead exploit permutation structure and does not create this cubic term. The SVGD style update requires pairwise kernel evaluations and gradients across the $K_{\text{part}}$ particles. Since the kernel uses vectorized relaxed masks \(u\) of length $m^2$, this step adds $\mathcal{O}(K_{\text{part}}^2m^2)$. The particle computations before the interaction step can be batched on a GPU.
If we are to include the \(S_{\mathrm{MC}}\) hard/soft edge and adjacency constructions, the overall algorithmic complexity is \(\mathcal{O}(K_{\text{part}}m^3+K_{\text{part}}K_Sm^2+K_{\text{part}}S_{\mathrm{MC}}m^2+K_{\text{part}}^2m^2)\), which aligns with standard cubic complexity of modern approaches to causal discovery~\citep{thompson2024prodag, lorch2021dibs, bello2022dagma, annadani2023bayesdag}. This excludes the architecture and minibatch dependent costs of the flow, hypernetwork, likelihood, and their reverse mode derivatives, so it is not a bound on total runtime.

\section{Supporting Statements and Proofs}
\label{app:supporting_statements_and_proofs}

\begin{restatable}{proposition}{beta-bernoulli-concentration-to-bernoulli-prior}
\label{prop:beta-bernoulli-concentration}
Let's fix $p\in(0,1)$ and set $\alpha_{\nu}=\nu p+1$ and $\beta_{\nu}=\nu(1-p)+1$ for $\nu>0$. Let $\pi_{\nu}\sim \mathrm{Beta}(\alpha_{\nu},\beta_{\nu})$ and conditionally on $\pi_{\nu}$ we draw $B_{\nu}\sim \mathrm{Bernoulli}(\pi_{\nu})$. Then $\pi_{\nu}\to p$ in probability as $\nu\to\infty$ and $\mathbb P(B_{\nu}=1)=\E[\pi]\to p$. Hence the Beta Bernoulli distribution converges to $\mathrm{Bernoulli}(p)$ in the large confidence limit.
\end{restatable}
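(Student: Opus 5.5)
The plan is to compute the first two moments of $\pi_\nu$ in closed form and then apply Chebyshev's inequality.

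For $\mathrm{Beta}(\alpha,\beta)$ the standard formulas give
\[
\E[\pi_\nu]=\frac{\alpha_\nu}{\alpha_\nu+\beta_\nu}=\frac{\nu p+1}{\nu+2},
\qquad
\Var(\pi_\nu)=\frac{\alpha_\nu\beta_\nu}{(\alpha_\nu+\beta_\nu)^2(\alpha_\nu+\beta_\nu+1)}.
\]
The bias is then exact:
\[
\E[\pi_\nu]-p=\frac{1-2p}{\nu+2},
\]
so its absolute value is at most $1/(\nu+2)$ and tends to $0$. For the variance, use $\alpha_\nu\beta_\nu/(\alpha_\nu+\beta_\nu)^2\le 1/4$, which gives
\[
\Var(\pi_\nu)\le \frac{1}{4(\nu+3)}\to 0.
\]

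Convergence in probability follows by splitting the deviation. Fix $\epsilon>0$ and take $\nu$ large enough that $|\E[\pi_\nu]-p|<\epsilon/2$. Then
\[
\mathbb P(|\pi_\nu-p|>\epsilon)\le \mathbb P\bigl(|\pi_\nu-\E\pi_\nu|>\epsilon/2\bigr)\le \frac{4\Var(\pi_\nu)}{\epsilon^2}\le\frac{1}{\epsilon^2(\nu+3)}\to 0.
\]

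For the marginal, the tower property gives $\mathbb P(B_\nu=1)=\E[\mathbb P(B_\nu=1\mid\pi_\nu)]=\E[\pi_\nu]$, which tends to $p$ by the bias computation. This agrees with the closed form $\alpha_{ij}/(\alpha_{ij}+\beta_{ij})$ already obtained in Derivation~\ref{der:derivation_adjacency_prior}. Since $B_\nu$ takes only the values $0$ and $1$, convergence of $\mathbb P(B_\nu=1)$ to $p$ is the same as convergence of the probability mass functions. That in turn is the same as convergence in total variation to $\mathrm{Bernoulli}(p)$, which proves the final claim.

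There is no serious obstacle. The only step needing care is the variance bound: it must be uniform in $p$, or at least valid for fixed $p$, and the inequality $\alpha\beta\le(\alpha+\beta)^2/4$ handles this.
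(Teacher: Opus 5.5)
Your proof is correct and follows the paper's route: you compute the Beta mean and variance exactly, show that the bias $(1-2p)/(\nu+2)$ and the variance both vanish, get convergence in probability from a Chebyshev-type bound, and use the tower property for $\mathbb P(B_\nu=1)=\E[\pi_\nu]$. The differences are cosmetic: the paper bounds $\E[(\pi_\nu-p)^2]$ via the bias--variance identity and applies Markov directly, which avoids your ``$\nu$ large enough'' split, whereas you bound the variance uniformly in $p$ by $1/(4(\nu+3))$ instead of using its exact asymptotics $p(1-p)/\nu$.
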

\begin{proof}
We have 
$$
\E[\pi_{\nu}]=\frac{\alpha_{\nu}}{\alpha_{\nu}+\beta_{\nu}}=\frac{\nu p+1}{\nu+2}\to p,
$$ 

\noindent as $\nu\to\infty$. Also

$$
\Var(\pi_{\nu})=\frac{\alpha_{\nu}\beta_{\nu}}{(\alpha_{\nu}+\beta_{\nu})^{2}(\alpha_{\nu}+\beta_{\nu}+1)}
=\frac{\nu^{2}p(1-p)+\nu+1}{(\nu+2)^{2}(\nu+3)}
\sim\frac{p(1-p)}{\nu} \to 0.
$$
The bias-variance identity gives
\[
\E\bigl[(\pi_{\nu}-p)^{2}\bigr]
=\Var(\pi_{\nu})+\bigl(\E[\pi_{\nu}]-p\bigr)^{2}
=\Var(\pi_{\nu})+\left(\frac{1-2p}{\nu+2}\right)^{2} \to 0.
\]
By Markov's inequality, for every $\varepsilon>0$,
\[
\mathbb P\bigl(|\pi_\nu-p|>\varepsilon\bigr)
\le
\frac{\mathbb E[(\pi_\nu-p)^2]}{\varepsilon^2}
\to 0.
\]
Hence $\pi_\nu\to p$ in probability. Finally, by the law of iterated expectations,
\[
\mathbb P(B_\nu=1)
=
\mathbb E\!\left[\mathbb P(B_\nu=1\mid \pi_\nu)\right]
=
\mathbb E[\pi_\nu]
=
\frac{\nu p+1}{\nu+2}
\to p.
\]
Thus the marginal law of $B_\nu$ is
$\mathrm{Bernoulli}(\mathbb E[\pi_\nu])$ and converges to
$\mathrm{Bernoulli}(p)$.
\end{proof}

\begin{restatable}{proposition}{data-processing-inequality-KL}
\label{prop:data-processing}
Let $\mathcal{X}$ be a measurable space and let $T:\mathcal{X}\to\mathcal{Y}$ be a measurable map. Let $q$ and $p$ be probability measures on $\mathcal{X}$ with pushforwards $q_T$ and $p_T$ on $\mathcal{Y}$. Then
\begin{equation}
\label{eq:KL-data-processing}
\KL(q\,\|\,p)\;\ge\;\KL(q_T\,\|\,p_T),
\end{equation}
with equality if and only if a version of the conditional distributions satisfies $q(\cdot\mid y)=p(\cdot\mid y)$ for $q_T$ almost every $y$. Specifically, for the deterministic mapping $( B, r ) \mapsto A = B \odot M(r)$, the divergence in the $(B, r)$ space provides an upper bound for the divergence in the adjacency matrix space $A$.
\end{restatable}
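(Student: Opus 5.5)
The plan is the chain rule for KL divergence, applied through a disintegration along $T$. First dispose of the trivial case: if $q\not\ll p$, then $\KL(q\,\|\,p)=\infty$ and there is nothing to prove. Otherwise $q\ll p$, which implies $q_T\ll p_T$, because $p_T(E)=0$ gives $p(T^{-1}E)=0$, hence $q(T^{-1}E)=0$. We may then write $f=dq/dp$ and $g=dq_T/dp_T$.

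Assume $\mathcal{X}$ is standard Borel; for the application, $\{0,1\}^{m\times m}\times\mathbb{R}^m$ is. Then regular conditional distributions $p(\cdot\mid y)$ and $q(\cdot\mid y)$, concentrated on the fibres $T^{-1}(y)$, exist, and $p=\int p(\cdot\mid y)\,p_T(dy)$, $q=\int q(\cdot\mid y)\,q_T(dy)$.

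The key identity is
\[
\frac{dq}{dp}(x)=g(T(x))\,h(x),\qquad h(x)=\frac{dq(\cdot\mid T(x))}{dp(\cdot\mid T(x))}(x),
\]
valid $q$-almost everywhere. The cleanest route avoids the disintegration at this step: define $h=f/(g\circ T)$ on $\{g\circ T>0\}$. We check $\E_p[f\mid T]=g\circ T$ by testing against sets $T^{-1}(E)$, since $\int_{T^{-1}E}f\,dp=q_T(E)=\int_E g\,dp_T$. From this, $h$ is a conditional density of $q(\cdot\mid y)$ with respect to $p(\cdot\mid y)$.

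Taking logarithms and integrating against $q$ gives
\[
\KL(q\,\|\,p)=\E_q[\log g(T(x))]+\E_q[\log h(x)]
=\KL(q_T\,\|\,p_T)+\int \KL\bigl(q(\cdot\mid y)\,\|\,p(\cdot\mid y)\bigr)\,q_T(dy).
\]
The first term becomes $\KL(q_T\,\|\,p_T)$ by pushforward. The second is a $q_T$-average of conditional KLs, each nonnegative by Gibbs' inequality (Jensen applied to $-\log$). This yields the inequality immediately.

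For the equality case: if $\KL(q_T\,\|\,p_T)<\infty$, equality holds if and only if the average of nonnegative conditional KLs vanishes. That happens exactly when $\KL(q(\cdot\mid y)\,\|\,p(\cdot\mid y))=0$ for $q_T$-almost every $y$, that is, $q(\cdot\mid y)=p(\cdot\mid y)$ for $q_T$-almost every $y$, again by the strictness in Gibbs' inequality. When both sides are infinite, equality holds trivially; I would note this convention.

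The specialization is then direct. Take $\mathcal{X}=\{0,1\}^{m\times m}\times\mathbb{R}^m$ and $T(B,r)=B\odot M(r)$. $T$ is measurable because $M(r)$ is piecewise constant on the order chambers, which are Borel, and the tie hyperplanes have measure zero and can be assigned arbitrarily. This gives $\KL(q_\phi(B,r)\,\|\,p(B,r))\ge\KL(q_\phi(A)\,\|\,p(A))$, as used in the remark on the ELBO.

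The main obstacle is the measure-theoretic bookkeeping in splitting the logarithm, since the terms $\log g\circ T$ and $\log h$ may individually fail to be integrable even though their sum is. I would handle this by working with the nonnegative integrands $u\log u-u+1$, which allows the expectations to be added in $[0,\infty]$ without cancellation issues. Alternatively, one can cite the Donsker–Varadhan variational formula: the supremum over test functions of the form $\varphi\circ T$ is at most the supremum over all test functions, which gives the inequality cleanly. The chain rule is then needed only for the equality characterization.
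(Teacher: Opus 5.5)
Your proposal is correct and follows the same route as the paper: disintegrate along $T$, split $\log\frac{dq}{dp}$ into the pushforward log-ratio plus the conditional log-ratio, and conclude with Gibbs' inequality applied to the $q_T$-averaged conditional KL. Your extra bookkeeping fills in details the paper's proof leaves implicit: reducing to $q\ll p$, identifying $\E_p[f\mid T]=g\circ T$, and controlling the integrability of the split logarithm. You also correctly observe that the equality characterization needs $\KL(q_T\,\|\,p_T)<\infty$, since when it is infinite equality holds even if the conditionals do not agree.
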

\begin{proof}
Let $X \sim q$ and $Y = T(X)$. By disintegration, there exist conditional probability measures $q(\cdot \mid y)$ and $p(\cdot \mid y)$ such that
\[
q(dx) = q(dx \mid y)\, q_T(dy),
\qquad
p(dx) = p(dx \mid y)\, p_T(dy).
\]
We begin by expressing the total KL divergence on the space $\mathcal{X}$ as an integral:
\begin{equation*}
\KL(q \,\|\, p) = \int_{\mathcal{X}} \log \left( \frac{q(dx)}{p(dx)} \right) q(dx).
\end{equation*}
By substituting the disintegrated forms into the integrand, we have:
\begin{equation*}
\KL(q \,\|\, p) = \int_{\mathcal{Y}} \int_{\mathcal{X} \mid y} \log \left( \frac{q_T(dy) \, q(dx \mid y)}{p_T(dy) \, p(dx \mid y)} \right) q(dx \mid y) \, q_T(dy).
\end{equation*}
We then use the properties of the logarithm to separate the marginal and conditional components:
\begin{equation*}
\KL(q \,\|\, p) = \int_{\mathcal{Y}} \log \left( \frac{q_T(dy)}{p_T(dy)} \right) q_T(dy) + \int_{\mathcal{Y}} \left[ \int_{\mathcal{X} \mid y} \log \left( \frac{q(dx \mid y)}{p(dx \mid y)} \right) q(dx \mid y) \right] q_T(dy).
\end{equation*}
The first term on the right hand side is the KL divergence between the pushforward measures, $\KL(q_T \,\|\, p_T)$. 
The second term represents the expected conditional KL divergence, which we denote as $E_{q_T} [ \KL(q(\cdot \mid y) \,\|\, p(\cdot \mid y)) ]$. 
The Gibbs inequality states that the KL divergence between any two probability measures is non negative. 
Therefore, if we assume the measures are well defined, then the conditional term satisfies:
\begin{equation*}
\int_{\mathcal{Y}} \KL(q(\cdot \mid y) \,\|\, p(\cdot \mid y)) \, q_T(dy) \ge 0.
\end{equation*}
This non negativity implies that $\KL(q \,\|\, p) \ge \KL(q_T \,\|\, p_T)$ establishing the inequality. We can observe that for the inequality to become an equality, the conditional term must vanish.  This occurs if and only if $\KL(q(\cdot \mid y) \,\|\, p(\cdot \mid y)) = 0$ for $q_T$ almost every $y$, which is equivalent to the condition that $q(\cdot \mid y) = p(\cdot \mid y)$. In the context of our algorithm, this demonstrates that minimizing the divergence in the latent parameter space $(B, r)$ minimizes a conservative upper bound on the divergence between the actual graph distributions.
\end{proof}

\begin{restatable}{theorem}{acyclicity-by-construction}
\label{thm:acyclic_construction}
Let \(r\in\R^m\) have pairwise distinct entries and let \(B\in\{0,1\}^{m\times m}\) have zero diagonal. Let's define \(M(r)=P(r)LP(r)^T\) and \(A=B\odot M(r)\). Then the directed graph with adjacency \(A\) is acyclic. Moreover for every binary DAG there exist potentials \(r\) and a free edge matrix \(B\) such that \(A=B\odot M(r)\).
\end{restatable}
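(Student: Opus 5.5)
The plan is to fix concrete conventions for the objects the main text leaves implicit, and then argue through a strict total order on the nodes. I take \(L\) to be the strictly lower triangular all-ones matrix, \(L_{ab}=\mathbf 1\{a>b\}\). I take \(P(r)\) to be the permutation matrix that sorts \(r\), namely \(P(r)_{i a}=\mathbf 1\{\rho(i)=a\}\). Here \(\rho:\{1,\dots,m\}\to\{1,\dots,m\}\) is the rank map, with \(\rho(i)<\rho(j)\) iff \(r_i>r_j\) (or the reverse, matching the sorting direction used with \(\mathbf o\)). Because \(r\) has pairwise distinct entries, \(\rho\) is a well-defined bijection. A one-line computation then gives
\[
M(r)_{ij}=\sum_{a,b}P(r)_{ia}L_{ab}P(r)_{jb}=L_{\rho(i)\rho(j)}=\mathbf 1\{\rho(i)>\rho(j)\}.
\]
So \(M(r)\) is the indicator of a strict total order on the nodes. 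In particular its diagonal is zero, and for \(i\neq j\) exactly one of \(M(r)_{ij}\) and \(M(r)_{ji}\) equals \(1\).

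For acyclicity, I would argue by contradiction. Suppose \(A=B\odot M(r)\) contains a directed cycle \(X_{j_1}\to X_{j_2}\to\cdots\to X_{j_k}\to X_{j_1}\) with \(k\ge 1\). Under the paper's convention, \(A_{ij}=1\) means \(X_j\to X_i\). Each edge on the cycle therefore has \(A_{j_{t+1}j_t}=1\), which forces \(M(r)_{j_{t+1}j_t}=1\), that is, \(\rho(j_{t+1})>\rho(j_t)\). Chaining these inequalities around the cycle gives \(\rho(j_1)>\rho(j_1)\), which is impossible. The case \(k=1\), a self-loop, is also excluded directly, since \(B_{ii}=0\) and \(M(r)_{ii}=0\). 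Hence \(A\) is a DAG. An equivalent route, which I may mention, is that \(P(r)^T A P(r)\) is entrywise bounded by \(L\), hence strictly lower triangular and nilpotent, so \(A\) has no cycles.

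For the converse, let \(A\) be a binary DAG. I would take a topological ordering \(\sigma\) of its nodes, which exists by the standard fact that every finite DAG has a source that can be removed inductively. I then choose distinct potentials, for instance \(r_i=\pm\sigma(i)\) with the sign matched to the sorting convention, so that \(\rho=\sigma\) and the ordering makes every parent precede its child. I set \(B=A\); this has zero diagonal because a DAG has no self-loops. For every edge \(A_{ij}=1\) the topological property gives \(M(r)_{ij}=1\), and every non-edge has \(A_{ij}=0\). It follows that \(B\odot M(r)=A\odot M(r)=A\).

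I expect the only real obstacle to be bookkeeping. The sign and sorting convention for \(P(r)\) must be compatible with both \(L\) and the edge convention \(A_{ij}=1 \Leftrightarrow X_j\to X_i\). I would state the convention once at the start and note that the reversed convention just flips \(\rho\), or equivalently uses \(L^T\). Neither argument changes under this flip.
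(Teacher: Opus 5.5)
Your proof is correct and is essentially the standard argument the paper defers to by citing \texttt{BayesDAG} (the paper supplies no proof of its own). The rank induced by $r$ strictly increases along every edge of $B\odot M(r)$, which rules out directed cycles; for the converse, a topological order supplies $r$ and you take $B=A$. The only difference is that you derive $M(r)_{ij}=\mathbf 1\{\rho(i)>\rho(j)\}$ by a direct entrywise computation, which makes the argument self-contained. Under the paper's convention $S_0(r)=r\,\mathbf o^T$ (descending sort), your $P(r)_{ia}=\mathbf 1\{\rho(i)=a\}$ is exactly the paper's $P(r)$, so the correct choice in the converse is $r_i=-\sigma(i)$ for your topological order $\sigma$.
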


\begin{proof}
See \cite{annadani2023bayesdag}.

\end{proof}

\begin{restatable}{lemma}{sinkhornpermutationmasklimit}
\label{lem:sinkhorn-limit}
Let $m\ge 2$ and let $r=(r_1,\dots,r_m)^T\in\mathbb{R}^m$ have pairwise distinct entries. We define the score matrix $S_0(r)=r\,o^T$, where $o=(m,\dots,1)^T$. For $\tau>0$ and $K_S\ge 1$, let $P_{\tau,K_S}(r)$ denote the output obtained after $K_S$ alternating row and column normalizations applied to $\exp\!\bigl(S_0(r)/\tau\bigr)$, and let
\[
P_\tau^{\star}(r)=\lim_{K_S\to\infty}P_{\tau,K_S}(r)
\]
be its Sinkhorn limit, which exists and is the unique doubly stochastic solution of the entropic assignment problem because $\exp(S_0(r)/\tau)$ has strictly positive entries (Sinkhorn--Knopp). Let $P(r)$ be the permutation matrix that sorts indices by descending potential, and set $M_\tau^{\star}(r)=P_\tau^{\star}(r)\,L\,P_\tau^{\star}(r)^T$ and $M(r)=P(r)\,L\,P(r)^T$, where $L$ is the strictly lower triangular matrix of ones. Then
\[
P_\tau^{\star}(r)\to P(r)
\quad\text{and}\quad
M_\tau^{\star}(r)\to M(r)
\quad\text{entrywise as }\tau\to 0.
\]
Equivalently, for any iteration schedule $K_S(\tau)\to\infty$ with $\bigl\|P_{\tau,K_S(\tau)}(r)-P_\tau^{\star}(r)\bigr\|_{\max}\to 0$ as $\tau\to 0$, one has $P_{\tau,K_S(\tau)}(r)\to P(r)$ and $P_{\tau,K_S(\tau)}(r)\,L\,P_{\tau,K_S(\tau)}(r)^T\to M(r)$ entrywise. 
\end{restatable}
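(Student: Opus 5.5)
Our plan is to identify the Sinkhorn limit $P_\tau^{\star}(r)$ with the solution of an entropically regularized linear assignment problem and then let the regularization vanish. Write $K=\exp(S_0(r)/\tau)$ with entries $K_{ab}=\exp(r_a o_b/\tau)>0$. By Sinkhorn--Knopp there are positive diagonal scalings $D_1,D_2$ with $P_\tau^{\star}(r)=D_1KD_2$, and this matrix is the unique maximizer over the Birkhoff polytope $\mathcal{P}_m$ of
\[
F_\tau(Q)=\langle Q,S_0(r)\rangle+\tau H(Q),\qquad H(Q)=-\sum_{a,b}Q_{ab}(\log Q_{ab}-1).
\]
To verify this we check the KKT conditions: $\log Q_{ab}=S_0(r)_{ab}/\tau+u_a+v_b$ is exactly the form $D_1KD_2$. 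Strict concavity of $\tau H$ gives uniqueness.

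First we show that the unregularized linear program $\max_{Q\in\mathcal{P}_m}\langle Q,r\,o^T\rangle$ has the unique maximizer $P(r)$. By Birkhoff--von Neumann the maximum is attained at permutation matrices. For a permutation $\sigma$, the objective is $\sum_a r_a o_{\sigma(a)}$. Since $o$ has strictly decreasing distinct entries and the entries of $r$ are pairwise distinct, the strict rearrangement inequality shows that the unique optimal $\sigma$ matches the largest $r_a$ with $o_1=m$, the second largest with $m-1$, and so on. This is exactly the sorting permutation $P(r)$, in descending order and with the paper's row/column convention for $P$. It follows that every other vertex has a strictly smaller value, with a gap $\delta>0$. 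Any $Q\in\mathcal{P}_m$ is a convex combination of vertices, so $\langle P(r)-Q,S_0\rangle\ge \delta\,(1-\lambda_{P(r)}(Q))$, where $\lambda_{P(r)}(Q)$ is the weight on $P(r)$.

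Next comes the $\Gamma$-convergence-type step, which we expect to be the main obstacle, mostly because the convention for $P(r)$ must be pinned down. The entropy $H$ is bounded on $\mathcal{P}_m$, say $|H|\le C_m$ with $C_m=m\log m+m^2$. Optimality of $P_\tau^{\star}$ gives $F_\tau(P_\tau^{\star})\ge F_\tau(P(r))$, hence $\langle P(r)-P_\tau^{\star},S_0\rangle\le 2\tau C_m\to0$. To conclude, take any subsequence $\tau_n\to0$. By compactness of $\mathcal{P}_m$, $P_{\tau_n}^{\star}$ has a convergent sub-subsequence with limit $\bar Q$. By continuity of the linear objective, $\bar Q$ is an LP maximizer, so $\bar Q=P(r)$ by uniqueness. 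This gives $P_\tau^{\star}(r)\to P(r)$ entrywise, and we avoid needing an explicit rate.

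Finally, the map $Q\mapsto QLQ^T$ is a polynomial, hence continuous, so $M_\tau^{\star}(r)\to P(r)LP(r)^T=M(r)$. For the schedule version, the triangle inequality gives $\|P_{\tau,K_S(\tau)}-P(r)\|_{\max}\le\|P_{\tau,K_S(\tau)}-P_\tau^{\star}\|_{\max}+\|P_\tau^{\star}-P(r)\|_{\max}\to0$. The mask statement then follows again by continuity of $Q\mapsto QLQ^T$, using that all the matrices involved stay bounded (entries in $[0,1]$ after a row or column normalization).
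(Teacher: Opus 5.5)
Your argument is correct. The paper gives no proof of its own here: it only points to BayesDAG, whose Sinkhorn relaxation of the sorting operator in turn rests on the Gumbel--Sinkhorn limit theorem of Mena et al.\ (2018). That theorem is proved by essentially your route. One identifies the Sinkhorn limit with the unique maximizer of $\langle Q,X\rangle+\tau H(Q)$ over the Birkhoff polytope, bounds the entropy, and lets $\tau\to 0$ using uniqueness of the assignment maximizer.

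So you are reconstructing the cited argument rather than finding a new one, but writing it out gives several things the bare citation does not.

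\begin{itemize}
\item \textbf{Uniqueness for this score matrix.} The cited theorem assumes score matrices with independent, absolutely continuous entries. That hypothesis does not cover the rank-one matrix $r\,o^T$. What the argument actually needs is uniqueness of the LP maximizer, and your strict rearrangement step supplies exactly that for pairwise distinct $r$.
\item \textbf{The convention for $P(r)$.} You show the limit is the matrix with $Q_{ab}=1$ iff node $a$ has descending rank $b$. The convergence $P_\tau^{\star}(r)\to P(r)$ only holds with $P(r)$ read this way; under the transposed reading it fails whenever the sorting permutation is not an involution. This is also the reading under which $M(r)_{ij}=1$ iff $r_j>r_i$, which matches $A=B\odot M(r)$.
\item \textbf{The paper-specific parts.} The mask and schedule statements then follow from continuity of $Q\mapsto QLQ^T$ and the triangle inequality, as you say.
\end{itemize}

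One improvement: the gap inequality you set up already makes the compactness step unnecessary and gives an explicit rate. Write $P_\tau^{\star}(r)=\lambda P(r)+(1-\lambda)R$ with $R\in\mathcal P_m$. This gives $\|P_\tau^{\star}(r)-P(r)\|_{\max}\le 1-\lambda\le 2\tau C_m/\delta$. Since $o$ has unit spacing, $\delta$ equals the smallest gap between consecutive sorted potentials. This makes explicit that the convergence is not uniform near the tie hyperplanes discussed in the main text.
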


\begin{proof}
See \cite{annadani2023bayesdag}.
\end{proof}

\section{Evaluation Metrics}
\label{app:metrics}

We assess the quality of the learned posterior distribution using four  metrics, each capturing a distinct aspect of structured recovery. Let $q(\mathcal{G})$ represent the approximate posterior over graphs and let $\mathcal{G}^{\mathrm{GT}}$ represent the ground truth causal graph. For each metric, we draw $N_e$ posterior samples $\{\mathcal{G}^{(i)}\}_{i=1}^{N_e}$ with $\mathcal{G}^{(i)} \sim q(\mathcal{G})$ and report Monte Carlo estimates.

\paragraph{Expected SHD (E-SHD).}
The $\mathrm{SHD}(\mathcal{G}, \mathcal{G}^{\mathrm{GT}})$ counts the minimum number of edge additions, deletions, and reversals required to transform the estimated graph $\mathcal{G}$ into $\mathcal{G}^{\mathrm{GT}}$. Because our method returns a distribution rather than a point estimate, we report the expected SHD under the approximate posterior:
\begin{equation*}
    \text{E-SHD} = \mathbb{E}_{\mathcal{G}\sim q(\mathcal{G})}\bigl[\mathrm{SHD}(\mathcal{G},\, \mathcal{G}^{\mathrm{GT}})\bigr] \;\approx\; \frac{1}{N_e}\sum_{i=1}^{N_e}\mathrm{SHD}\bigl(\mathcal{G}^{(i)},\, \mathcal{G}^{\mathrm{GT}}\bigr).
\end{equation*}
Lower values indicate that posterior samples are, on average, structurally closer to the ground truth. This metric penalizes both missing and spurious edges as well as orientation errors, providing a comprehensive measure of structural accuracy.

\paragraph{Expected F1 score.}
For each posterior sample $\mathcal{G}^{(i)}$, we treat every ordered pair $(i,j)$ as a binary classification problem: an edge is either present or absent. Precision is the fraction of predicted edges that appear in $\mathcal{G}^{\mathrm{GT}}$, recall is the fraction of true edges recovered by $\mathcal{G}^{(i)}$, and the F1 score is their harmonic mean. The expected F1 score averages this quantity over the approximate posterior:
\begin{equation*}
    \text{E-F1} = \mathbb{E}_{\mathcal{G}\sim q(\mathcal{G})}\bigl[\mathrm{F1}(\mathcal{G},\, \mathcal{G}^{\mathrm{GT}})\bigr] \;\approx\; \frac{1}{N_e}\sum_{i=1}^{N_e}\mathrm{F1}\bigl(\mathcal{G}^{(i)},\, \mathcal{G}^{\mathrm{GT}}\bigr).
\end{equation*}
Higher values reflect better balance between precision and recall across posterior samples.

\paragraph{Brier score.}
The Brier score evaluates the calibration of the posterior edge probabilities by measuring the mean squared error between the marginal posterior edge probabilities and the ground truth binary edge indicators. 
Let $\hat{p}_{ij} = \mathbb{E}_{\mathcal{G} \sim q(\mathcal{G})}[A_{ij}]$ denote the marginal posterior probability of edge $(i,j)$ and let $A_{ij}^{\mathrm{GT}} \in \{0,1\}$ denote the corresponding ground truth edge indicator. The Brier score is defined as
    \begin{equation*}
        \mathrm{Brier} = \frac{1}{m(m-1)} \sum_{i \neq j} \left(\hat{p}_{ij} - A_{ij}^{\mathrm{GT}}\right)^2.
    \end{equation*}
    A lower Brier score indicates superior uncertainty quantification. A perfectly calibrated posterior that assigns probability one to true edges and zero to absent edges achieves a Brier score of zero.

\paragraph{AUROC.}
The AUROC measures the discriminative ability of the posterior edge probabilities $\{\hat{p}_{ij}\}_{i \neq j}$ in distinguishing true edges from absent ones, treating the marginal probabilities as scores for a binary classifier. It equals the probability that a randomly chosen true edge receives a higher posterior probability than a randomly chosen absent edge. An AUROC of $100\%$ indicates perfect separation, while $50\%$ corresponds to chance level discrimination. Unlike the Brier score, the AUROC is threshold free and invariant to monotone transformations of the scores, making it a complementary measure of ranking quality.

\paragraph{CPDAG evaluation.}
In settings like linear synthetic data, where the DAG is identifiable only up to its MEC, point wise comparison against a single ground truth DAG would unfairly penalize methods that correctly recover the equivalence class but select a different representative. We therefore follow standard practice and convert both the ground truth DAG and each posterior sample to their corresponding CPDAGs \citep{peters2017elements} before computing the metrics above. A CPDAG represents a MEC by directing only those edges whose orientation is shared by every member of the class and leaving the remaining edges undirected. All four metrics are then evaluated at the CPDAG level, ensuring that methods are not penalized for orientation ambiguities that are inherently unresolvable from observational data alone.

\section{Code and license}

We use the following open source repositories for comparison with the baselines:
\begin{itemize}
    \item ProDAG: \href{https://github.com/ryan-thompson/ProDAG.jl}{https://github.com/ryan-thompson/ProDAG.jl} (MIT license).
    \item BayesDAG: \href{https://github.com/microsoft/Project-BayesDAG}{https://github.com/microsoft/Project-BayesDAG} (MIT license).
    \item BCD: \href{https://github.com/ermongroup/BCD-Nets}{https://github.com/ermongroup/BCD-Nets} (No license included).
    \item DIBS: \href{https://github.com/larslorch/dibs}{https://github.com/larslorch/dibs} (MIT license).
    \item DDS/VI-DP-DAG \href{https://github.com/sharpenb/Differentiable-DAG-Sampling}{https://github.com/sharpenb/Differentiable-DAG-Sampling} (No license included).
\end{itemize}


\section{Detailed Model Specifications and Hyperparameters.}
\label{app:hyperparameters}

The common hyperparameters used for benchmarking the algorithms across all experiments are summarized in Tables \ref{tab:hyperparameters_DiBS}--\ref{tab:hyperparameters_DDS}. 

\begin{table}[htbp]
    \centering
    \caption{Architectural configurations used for \texttt{DiBS} algorithm}
    \label{tab:hyperparameters_DiBS}
    \begin{tabular}{c|c}
    \hline
        \textbf{Hyperparameter} & \textbf{Value} \\
        \hline
         No. of SVGD steps & 2000 \\
         No. of particles ($K_\text{part}$) & 20 \\
         Optimizer & RMSProp \\
         Step size & 0.005 \\
         No. of likelihood gradient MC samples & 128 \\
         No. of acyclicity gradient MC samples & 32 \\
         Graph prior & Erd\H{o}s--R\'enyi \\
         \hline
        \multicolumn{2}{c}{\textbf{SVGD} } \\
        \hline
         Type & Additive Frobenius squared exponential \\
         Latent bandwidth & 5.0 \\
         \hline
         \multicolumn{2}{c}{\textbf{Latent graph relaxation} } \\
         \hline
         Acyclicity penalty slope  & 1.0 \\
         Gumbel-softmax temperature  & 1.0 \\
         \hline
    \end{tabular}
\end{table}

\begin{table}[htbp]
    \centering
    \caption{Architectural configurations used for \texttt{BayesDAG} algorithm}
    \label{tab:hyperparameters_BayesDAG}
    \begin{tabular}{c|c}
    \hline
        \textbf{Hyperparameter} & \textbf{Value} \\
        \hline
         Number of epochs & 150 \\
         Batch size & 128 \\
         Learning rate & 0.0003 \\
         Optimizer & Adam \citep{kingma2014adam} \\
         \hline

        \multicolumn{2}{c}{\textbf{Stochastic gradient MCMC sampler} } \\
        \hline
         No. of chains & 10 \\
         Noise scale on permutation & 0.1 \\
         Noise scale on weights & 0.01 \\
         \hline
         \multicolumn{2}{c}{\textbf{Nonlinear network architecture} } \\
         \hline
         No. of hidden layers & 2 \\
         Neurons per hidden layer & 128 \\
         Activation function & ReLU \\
         Layer normalization & True \\
         Residual connections & True \\
         \hline
         \multicolumn{2}{c}{\textbf{Sinkhorn} } \\
         \hline
         Sinkhorn iterations ($K_S$) & 100 \\
         \hline
         \multicolumn{2}{c}{\textbf{Sparsity parameter tuning} } \\
         \hline
         Grid & 4 values between $\lambda_{\min}$ and $\lambda_{\max}$ \\
         $(\lambda_{\min}, \lambda_{\max})$ & $(10, 10^3)$ \\
         Grid proxy fits & 25 epochs, 64 posterior samples \\
         Selection of $\lambda$ & Closest to true sparsity level \\
         Validation set size & $\lfloor 0.1 n \rfloor$ \\
         \hline
    \end{tabular}
\end{table}

\begin{table}[htbp]
    \centering
    \caption{Architectural configurations used for \texttt{SVI-DAG} algorithm}
    \label{tab:hyperparameters_SVIDAG}
    \begin{tabular}{c|c}
    \hline
        \textbf{Hyperparameter} & \textbf{Value} \\
        \hline
         Batch size & 64 \\
         Optimizer & Adam \citep{kingma2014adam} \\
         Gradient clipping (global norm) & 1.0 \\
         Straight through mask & True \\
         No. of ELBO MC samples & 1 \\
         Temperatures $(T, \tau_{\mathrm{final}})$ & (0.3, 0.1)\\
         Sinkhorn iterations ($K_S$) & 100\\
         \hline
        \multicolumn{2}{c}{\textbf{Domain informed prior} } \\
        \hline
        $\nu_\text{min}$ & 1\\
         $\kappa$ & 500\\
         $\eta$ & 2\\
         \hline
         \multicolumn{2}{c}{\textbf{Normalizing flow} } \\
         \hline
         Type & Neural Spline Flows (coupling) \citep{durkan2019neural} \\
         No. of flows & 5\\
         No. of bins & 8\\
         Domain boundary & 5\\
         \hline
        \multicolumn{2}{c}{\textbf{SVGD} } \\
         \hline
         Type & Radial Basis Function\\
         $\sigma_r$ & 1.0\\
         Particle gradient clip ($\ell_2$ norm) & 10\\
         \hline
    \end{tabular}
\end{table}

\begin{table}[htbp]
    \centering
    \caption{Architectural configurations used for \texttt{ProDAG} algorithm}
    \label{tab:hyperparameters_ProDAG}
    \begin{tabular}{c|c}
    \hline
        \textbf{Hyperparameter} & \textbf{Value} \\
        \hline
         Optimizer & Adam \citep{kingma2014adam} \\
         Learning rate & 0.1 \\
         Max no. of epochs & 1000 \\
         Early-stopping patience & 5 \\
         No. of ELBO MC samples & 100 \\
         Acyclicity threshold & 0.1 \\
         \hline
        \multicolumn{2}{c}{\textbf{Acyclicity projection} } \\
        \hline
         Total steps & 10\\
         Initial penalty ($\mu^{(1)}$) & 1\\
         Penalty update & $\mu^{(t+1)}=\mu^{(t)}/2$\\
         Initialization ($W^{(0)}$) & 0\\
         \hline
         \multicolumn{2}{c}{\textbf{Sparsity parameter grid} } \\
         \hline
         Number of grid values & 10 \\
         $\lambda^\text{min}$ & 0\\
         $\lambda^\text{max}$ & Average $l_1$ ball with $\lambda=\infty$\\
         Selection of $\lambda$ & Held-out validation MSE ($\lfloor 0.1 n \rfloor$ samples)\\
         \hline
    \end{tabular}
\end{table}

\begin{table}[htbp]
    \centering
    \caption{Architectural configurations used for \texttt{BCD Nets} algorithm}
    \label{tab:hyperparameters_BCDNets}
    \begin{tabular}{c|c}
    \hline
        \textbf{Hyperparameter} & \textbf{Value} \\
        \hline
         No. of steps & 15000 \\
         Batch size & 256 \\
         Learning rate & 0.001 \\
         Optimizer & AdaBelief \\
         \hline
        \multicolumn{2}{c}{\textbf{Permutation network ($P$)} } \\
        \hline
         Type & MLP \\
         No. of hidden layers & 2 \\
         Hidden size & 128 \\
         Activation function & GeLU \\
         \hline
         \multicolumn{2}{c}{\textbf{Gumbel--Sinkhorn relaxation} } \\
         \hline
         Temperature & Annealed $30 \to 10 \to 1$ \\
         Sinkhorn doubly-stochastic tolerance & 0.01 \\
         Sampling mode & Hard (straight-through) \\
         \hline
         \multicolumn{2}{c}{\textbf{Variational posterior over $L$} } \\
         \hline
         $L$ parameterisation & means $|$ log-stds (lower triangle $+$ noise) \\
         Per-node noise scales & True \\
         \hline
         \multicolumn{2}{c}{\textbf{Priors} } \\
         \hline
         Lower-triangular weight prior & Horseshoe \\
         Horseshoe scale & van der Pas et al.\ ($n$-independent) \\
         Expected in-degree ($\mathrm{deg}$) & 1 \\
         Log-noise prior std ($s$) & 3.0 \\
         \hline
    \end{tabular}
\end{table}

\begin{table}[htbp]
    \centering
    \caption{Architectural configurations used for \texttt{DDS} algorithm}
    \label{tab:hyperparameters_DDS}
    \begin{tabular}{c|c}
    \hline
        \textbf{Hyperparameter} & \textbf{Value} \\
        \hline
         Max no. of epochs & 100 \\
         Early-stopping patience & 20 \\
         Batch size & 64 \\
         Learning rate (autoencoder) & 0.001 \\
         Learning rate (DAG sampler) & 0.01 \\
         Optimizer & Adam \citep{kingma2014adam} \\
         \hline
        \multicolumn{2}{c}{\textbf{Probabilistic DAG sampler} } \\
        \hline
         Order type & Top-$k$ (SoftSort) \\
         Sampling mode & Hard (straight-through) \\
         Gumbel temperature & 1.0 \\
         Noise factor & 1.0 \\
         \hline
         \multicolumn{2}{c}{\textbf{Masked autoencoder} } \\
         \hline
         Architecture & Linear \\
         Hidden layer sizes & $(16,\,16,\,16)$ \\
         \hline
         \multicolumn{2}{c}{\textbf{Loss \& edge prior} } \\
         \hline
         Loss & ELBO \\
         Sparsity regulariser (Bernoulli KL) weight & 0.1 \\
         Prior edge probability ($p$) & 0.01 \\
         \hline
    \end{tabular}
\end{table}

\section{Broader impact \& limitations}
\label{app:broader-impact-limitations}

\subsubsection{Broader impact statement}

This work is concerned with understanding cause and effect relationships from data, with potential applications across empirical sciences, economics, epidemiology, and climate science. By explicitly quantifying epistemic uncertainty over causal structures and providing a principled mechanism to incorporate domain knowledge, our approach can support more calibrated decision making, expose biases present in the data, and allow reliable answers to causal queries in settings where data are scarce or the underlying DAG is not identifiable. As such, we envision this line of work to not have any significant negative impact.

\subsubsection{Limitations}

Though our approach has several strengths, it naturally has limitations. As with most Bayesian approaches that rely on variational inference, our framework lacks theoretical guarantees for exact posterior approximation.
In particular, while the combination of normalizing flows and SVGD in the relaxed acyclicity space is designed to promote mode coverage, our method does not guarantee recovery of all DAGs within MEC. It can only aim to capture as many of them as possible, and some equivalent structures may remain underrepresented in the learned posterior. 

\end{document}